\documentclass[11pt]{article}

\usepackage{fullpage,times,url}

\usepackage{amsthm,amsfonts,amsmath,amssymb,epsfig,color,float,graphicx,verbatim}
\usepackage{algorithmic}

\usepackage[ruled,vlined]{algorithm2e}
\usepackage{bbm}
\usepackage{authblk}

\usepackage{hyperref}
\hypersetup{
	colorlinks   = true, %Colours links instead of ugly boxes
	urlcolor     = blue, %Colour for external hyperlinks
	linkcolor    = blue, %Colour of internal links
	citecolor   = black %Colour of citations
}

\newtheorem{theorem}{Theorem}[section]
\newtheorem{proposition}[theorem]{Proposition}
\newtheorem{lemma}[theorem]{Lemma}

\newtheorem{definition}[theorem]{Definition}

\newtheorem{remark}[theorem]{Remark}

\newcommand{\reals}{\mathbb{R}}

\newcommand{\sign}{\mathrm{sign}}

\usepackage[square,numbers]{natbib}

\newcommand{\norm}[1]{\|#1\|}
\newcommand{\inner}[1]{\langle#1\rangle}

\newcommand{\prob}[1]{{\mathbb{P}}\left[ #1 \right] }

\newcommand{\meand}[2]{{\mathbb{E}}_{#1}\left[ #2 \right] }
\newcommand{\abs}[1]{\left \lvert #1 \right \rvert}

\newcommand{\secref}[1]{Sec.~\ref{#1}}

\renewcommand{\eqref}[1]{Eq.~(\ref{#1})}
\newcommand{\lemref}[1]{Lemma~\ref{#1}}

\newcommand{\thmref}[1]{Thm.~\ref{#1}}
\newcommand{\propref}[1]{Proposition~\ref{#1}}
\newcommand{\appref}[1]{Appendix~\ref{#1}}

\makeatletter
\newcommand{\printfnsymbol}[1]{%
  \textsuperscript{\@fnsymbol{#1}}%
}
\makeatother

% %==============================================================================
% %%%%%%%%%%%% no blank lines in bibliography %%%%%%%%

% \def\thebibliography#1{\section*{References%
%  %\@mkboth {REFERENCES}{REFERENCES}
% }\list
%   {[\arabic{enumi}]}{\settowidth\labelwidth{[#1]}\leftmargin\labelwidth
%     \advance\leftmargin\labelsep
%     \usecounter{enumi}\itemsep 0pt\parsep 0pt}
%     \def\newblock{\hskip .11em plus .33em minus .07em}
%     \sloppy\clubpenalty4000\bwidowpenalty4000
%     \sfcode`\.=1000\relax}
% %==============================================================================

\title{Proving the Lottery Ticket Hypothesis: Pruning is All You Need}
\author[1]{Eran Malach\thanks{equal contribution}}
\author[2]{Gilad Yehudai\printfnsymbol{1}}
\author[1]{Shai Shalev-Shwartz}
\author[2]{Ohad Shamir}
\affil[1]{School of Computer Science, Hebrew University}
\affil[2]{Weizmann Institute of Science}
\date{}

\begin{document}
\maketitle

\begin{abstract}
    The lottery ticket hypothesis (Frankle and Carbin, 2018), states that a randomly-initialized network contains a small subnetwork such that, when trained in isolation, can compete with the performance of the original network. 
    We prove an even stronger hypothesis (as was also conjectured in Ramanujan et al., 2019), showing that for every bounded distribution and every target network with bounded weights, a sufficiently over-parameterized neural network with random weights contains a subnetwork with roughly the same accuracy as the target network, without any further training. 
\end{abstract}

\section{Introduction}
Neural network pruning is a popular method to reduce the size of a trained model, allowing efficient computation during inference time, with minimal loss in accuracy. However, such a method still requires the process of training an over-parameterized network, as training a pruned network from scratch seems to fail (see \cite{frankle2018lottery}). Recently, a work by \citet{frankle2018lottery} has presented a surprising phenomenon: pruned neural networks can be trained to achieve good performance, when resetting their weights to their initial values. Hence, the authors state \textit{the lottery ticket hypothesis}: a randomly-initialized neural network contains a subnetwork such that, when trained in isolation, can match the performance of the original network.

This observation has attracted great interest, with various follow-up works trying to understand this intriguing phenomenon. Specifically, very recent works by \citet{zhou2019deconstructing,ramanujan2019s} presented algorithms to find subnetworks that already achieve good performance, without any training. \cite{ramanujan2019s} stated the following conjecture: a sufficiently over-parameterized neural network with random initialization contains a subnetwork that achieves competitive accuracy (with respect to the large trained network), without any training. This conjecture can be viewed as a stronger version of the lottery ticket hypothesis.

In this work, we prove this stronger conjecture, in the case of over-parameterized neural networks. Moreover, we differentiate between two types of subnetworks: subnetworks where specific weights are removed (\textit{weight-subnetworks}) and subnetworks where entire neurons are removed (\textit{neuron-subnetworks}).
First, we show that a ReLU network of arbitrary depth $l$ can be approximated by finding a \textit{weight-subnetwork} of a random network of depth $2l$ and sufficient width.
Second, we show that depth-two (one hidden-layer) networks have \textit{neuron-subnetworks} that are competitive with the best random-features classifier (i.e. the best classifier achieved when training only the second layer of the network). Hence, we imply that for shallow networks, training the second layer of the network is equivalent to pruning entire neurons of a sufficiently large random network. 
In all our results, the size of initial network is polynomial in the problem parameters. In the case of the \textit{weight-subnetwork}, we show that the number of parameters in the pruned network is similar, up to a constant factor, to the number of parameters in the target network.

As far as we are aware, this is the first work that gives theoretical evidence to the existence of good subnetworks within a randomly initialized neural network (i.e., proving the strong lottery ticket hypothesis). Our results imply that fundamentally, pruning a randomly initialized network is as strong as optimizing the value of the weights. Hence, while the common method for finding a good network is to train its parameters, our work demonstrates that in fact, all you need is a good pruning mechanism. This gives a strong motivation to develop algorithms that focus on pruning the weights rather than optimizing their values.

\subsection{Related Work}

\paragraph{Neural Network Pruning}
Pruning neural networks is a popular method to compress large models, allowing them to run on devices with limited resources. Over the years, a variety of pruning methods were suggested, showing that neural network models can be reduced by up to 90\%, with minimal performance loss. These methods differ in two aspects: how to prune (the pruning criterion), and what to prune (specific weights vs. entire neurons or convolutional channels).
Works by \citet{lecun1990optimal, hassibi1993second,dong2017learning} explored the efficiency of network pruning based on second derivative conditions. Another popular method is pruning based on the magnitude of the weights \cite{han2015learning}. Other pruning techniques remove neurons with zero activation \cite{hu2016network}, or other measures of redundancy \cite{mariet2015diversity, srinivas2015data}.
While weight-based pruning achieves the best results in terms of network compression, the gain in terms of inference time is not optimal, as it cannot be efficiently utilized by modern hardware. To get an effective gain in performance, recent works suggested methods to prune entire neurons or convolutional channels \cite{yang2017designing, li2016pruning,molchanov2017variational,luo2017thinet}.

In our work, we show that surprisingly, pruning a random network achieves results that are competitive with optimizing the weights. Furthermore, we compare neuron-based pruning to weight-based pruning, and show that the latter can achieve strictly stronger performance. We are unaware of any theoretical work studying the power and limitation of such pruning methods.

\paragraph{Lottery Ticket Hypothesis}

In \cite{frankle2018lottery}, Frankle and Carbin stated the original \textbf{lottery ticket hypothesis}: \textit{A randomly-initialized, dense neural network contains a subnetwork that is initialized such that — when trained in isolation — it can match the test accuracy of the
original network after training for at most the same number of iterations}.
This conjecture, if it is true, has rather promising practical implications - it suggests that the inefficient process of training a large network is in fact unnecessary, as one only needs to find a good small subnetwork, and then train it separately. While finding a good subnetwork is not trivial, it might still be simpler than training a neural network with millions of parameters.

A follow up work by \citet{zhou2019deconstructing} claims that the ``winning-tickets'', i.e., the good initial subnetwork, already has better-than-random performance on the data, without any training. With this in mind, they suggest an algorithm to find a good subnetwork within a randomly initialized network that achieves good accuracy. Building upon this work, another work by \citet{ramanujan2019s} suggests an improved algorithm which finds an untrained subnetwork that approaches state-of-the-art performance, for various architectures and datasets. Following these observations, \cite{ramanujan2019s} suggested a complementary conjceture to the original lottery ticket hypothesis: \textit{within a sufficiently overparameterized neural network with
random weights (e.g. at initialization), there exists a subnetwork that achieves competitive accuracy}.

While these results raise very intriguing claims, they are all based on empirical observations alone. Our work aims to give theoretical evidence to these empirical results. We prove the latter conjecture, stated in \cite{ramanujan2019s}, in the case of deep and shallow neural networks. To the best of our knowledge, this is the first theoretical work aiming to explain the strong lottery ticket conjecture, as stated in \cite{ramanujan2019s}. 

\paragraph{Over-parameterization and random features} A popular recent line of works showed how gradient methods over highly over-parameterized neural networks can learn various target functions in polynomial time (e.g. \cite{allen2019learning},\cite{daniely2017sgd},\cite{arora2019fine},\cite{cao2019generalization}). However, recent works (e.g. \cite{yehudai2019power}, \cite{ghorbani2019linearized}, \cite{ghorbani2019limitations}) show the limitations of the analysis in the above approach, and compare the power of the analysis to that of random features. In particular, \cite{yehudai2019power} show that this approach cannot efficiently approximate a single ReLU neuron, even if the distribution is standard Gaussian. In this work we show that finding a shallow \emph{neuron-subnetwork} is equivalent to learning with random features, and that \emph{weight-subnetworks} is a \emph{strictly} stronger model in the sense that it can efficiently approximate ReLU neurons, under mild assumptions on the distribution (namely, that it is bounded).

\subsection{Notations}
We introduce some notations that will be used in the sequel. We denote by $\mathcal{X} = \{x \in \reals^d ~:~ \norm{x}_2 \le 1\}$ \footnote{The assumption that $\norm{x} \le 1$ is made for simplicity. It can be readily extended to $\norm{x}\leq r$ for any $r$ at the cost of having the network size depend polynomially on $r$.} our instance space.
% In this paper we consider both classification and regression problems, so our label set $\mathcal{Y}$ will be either $\{\pm 1\}$ for classification or $\reals$ for regression. 
For a distribution $\mathcal{D}$ over $\mathcal{X} \times \mathcal{Y}$, we denote the squared-loss of a hypothesis $h : \mathcal{X} \to \reals$ by:
\[
L_{\mathcal{D}}(h) = \meand{(x,y) \sim \mathcal{D}}{(h(x)-y)^2}~.
\]
For two matrices $A,B \in \reals^{m \times n}$, we denote by $A \odot B = [A_{i,j} B_{i,j}]_{i,j}$ the Hadamard (element-wise) product between $A$ and $B$. We use $U([-c,c]^k)$ to denote the uniform distribution on some cube around zero, and by $\mathcal{N}(0,\Sigma)$ a normal distribution with mean zero and covariance matrix $\Sigma$. For a matrix $H$ we denote by $\lambda_{\min}(H)$ its minimal eigenvalue.
For a matrix $A$, we denote by $\norm{A}_2$ the $L_2$ operator norm of $A$, namely $\norm{A}_2 := \lambda_{\max}(A)$ where $\lambda_{\max}$ is the largest singular value of $A$. We denote by $\norm{A}_{\max}$ the max norm of $A$, namely $\norm{A}_{\max} := \max_{i,j} \abs{A_{i,j}}$.

\section{Approximating ReLU Networks by Pruning Weights}
\label{sec:pruning_weights}

In this section we provide our main result, showing that a network of depth $l$ can be approximated by pruning a random network of depth $2l$. We show this for a setting where we are allowed to prune specific weights, and are not limited to removing entire neurons (i.e. finding \textit{weight-subnetworks}). Neuron-subnetworks are discussed in the next section. We further focus on networks with the ReLU activation, $\sigma(x) = \max \{x,0\}$.
We define a network $G: \reals^d \to \reals$ of depth $l$ and width\footnote{We consider all layers of the network to be of the same width for convenience of notations. Our results can easily be extended to cases where the size of the layers differ.} $n$ in the following way:
\[
G(x) = G^{(l)} \circ \dots \circ G^{(1)} (x)
\]
Where we have:
\begin{itemize}
    \item $G^{(1)}(x) = \sigma(W^{G(1)} x)$ for $W^{G(1)} \in  \reals^{d \times n}$.
    \item $G^{(i)}(x) =  \sigma(W^{G(i)} x)$ for $W^{G(i)} \in  \reals^{n \times n}$, for every $1 < i < l$.
    \item $G^{(l)}(x) = W^{G(l)} x$ for $W^{G(l)} \in \reals^{n \times 1}$
\end{itemize}

A \textbf{weight-subnetwork} $\widetilde{G}$ of $G$ is a network of width $n$ and depth $l$, with weights $W^{\widetilde{G}(i)} := B^{(i)} \odot W^{G(i)}$ for some mask $B^{(i)} \in \{0,1\}^{n_{in} \times n_{out}}$. Our main theorem in this section shows that for every target network of depth $l$ with bounded weights, a random network of depth $2l$ and polynomial width contains with high probability a subnetwork that approximates the target network:

\begin{theorem}
\label{thm:deep_network}
Fix some $\epsilon, \delta \in (0,1)$. Let $F$ be some target network of depth $l$ such that for every $i \in [l]$ we have $\norm{W^{F(i)}}_2 \le 1$,$\norm{W^{F(i)}}_{\max} \le \frac{1}{\sqrt{n_{in}}}$ (where $n_{in} = d$ for $i=1$ and $n_{in} = n$ for $i > 1$).
Let $G$ be a network of width $\mathrm{poly}(d,n,l, \frac{1}{\epsilon}, \log \frac{1}{\delta})$ and depth $2l$, where we initialize $W^{G(i)}$ from $U([-1, 1])$.
Then, w.p at least $1-\delta$ there exists a weight-subnetwork
$\widetilde{G}$ of $G$ such that:
\[
\sup_{x \in \mathcal{X}}\abs{\widetilde{G}(x) - F(x)} \le \epsilon
\]
Furthermore, the number of active (non-zero) weights in $\widetilde{G}$ is $O(dn +n^2l)$.
\end{theorem}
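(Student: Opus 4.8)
The plan is to emulate each layer $F^{(i)}$ of the target using the \emph{pair} of layers $2i-1,2i$ of $G$, and — this is what makes the active-weight count come out right — to prune each such pair down to a single surviving connection per weight of $F^{(i)}$. The basic fact behind this is a \textbf{random-product lemma}: if $a,b\sim U([-1,1])$ are independent and $|t|\le 1/\sqrt{n_{in}}$, then $\prob{|ab-t|\le\epsilon'}\ge c\epsilon'$ for a universal constant $c$ — with constant probability $a\in[1/\sqrt{n_{in}},1]$, in which case $t/a\in[-1,1]$ and $b$ lands within an interval of length $\gtrsim\epsilon'$ around $t/a$ with probability $\gtrsim\epsilon'$. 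Hence among $M=O(\epsilon'^{-1}\log(1/\delta'))$ independent pairs, with probability $1-\delta'$ some product is within $\epsilon'$ of $t$; I take $\delta'$ small enough that a union bound over all $O(dn+n^2l)$ target weights gives failure probability $\le\delta$, which forces $M=\mathrm{poly}(d,n,l,1/\epsilon,\log(1/\delta))$.

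\textbf{Emulating one layer by pruning two.} Assume inductively that $\widetilde{G}$ has produced $\tilde h$, an approximation of $h = F^{(i-1)}\circ\cdots\circ F^{(1)}(x)$, with $\tilde h\ge 0$ coordinatewise (for $i\ge 2$ it is a ReLU output). For each entry $W^{F(i)}_{km}$ reserve in layer $2i-1$ a fresh pool of $M$ neurons, each masked to read only coordinate $m$; such a neuron $j$ with random weight $w_{mj}>0$ computes $\sigma(w_{mj}\tilde h_m)=w_{mj}\tilde h_m$, and is connected to neuron $k$ of layer $2i$ by the random masked weight $v_{jk}$. By the lemma some $j^\star$ has $w_{mj^\star}v_{j^\star k}=W^{F(i)}_{km}\pm\epsilon'$; keep only that connection and prune the rest of the pool. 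Neuron $k$ of layer $2i$ then computes $\sigma\!\big(\sum_m(W^{F(i)}_{km}\pm\epsilon')\tilde h_m\big)$, i.e.\ $F^{(i)}(\tilde h)_k$ up to an additive $O(\sqrt{n_{in}}\,\epsilon'\|\tilde h\|_2)$ error. For $i=1$ the input may be negative, so I use that a neuron reading $x_m$ with a positive (resp.\ negative) random weight outputs a positive multiple of $\sigma(x_m)$ (resp.\ $\sigma(-x_m)$) and reconstruct $x_m=\sigma(x_m)-\sigma(-x_m)$ with two surviving connections per entry; the last layer of $F$ is linear and is handled identically using that layer $2l$ of $G$ is linear.

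\textbf{Error propagation and counting.} Let $\tilde h^{(i)},h^{(i)}$ be the depth-$i$ outputs of $\widetilde{G}$ and $F$. Since $\sigma$ is $1$-Lipschitz and $\|W^{F(i)}\|_2\le1$, and $\|h^{(i)}\|_2\le\|x\|_2\le1$ keeps all activations bounded, one gets $\|\tilde h^{(i)}-h^{(i)}\|_2\le\|\tilde h^{(i-1)}-h^{(i-1)}\|_2+O(n\epsilon')$; unrolling over $l$ layers gives total error $O(nl\,\epsilon')$, uniformly in $x$, so $\epsilon'=\Theta(\epsilon/(nl))$ yields $\sup_{x\in\Xcal}|\widetilde{G}(x)-F(x)|\le\epsilon$. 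Each emulated layer keeps $O(n\,n_{in})$ active weights (two per target entry, times an $O(1)$ factor at the input layer), so the totals are $O(dn)$ from layer $1$, $O(n^2)$ from each middle layer and $O(n)$ from the last, i.e.\ $O(dn+n^2l)$ overall; the width needed is $n_{in}\cdot n\cdot M=\mathrm{poly}(d,n,l,1/\epsilon,\log(1/\delta))$.

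\textbf{Main difficulty.} The crux is reconciling accuracy with sparsity: reproducing each weight to accuracy $\epsilon/\mathrm{poly}(n,l)$ while leaving only $O(dn+n^2l)$ active weights. A subset-sum approximation of each weight (the naive route, since there are no bias terms) would use many surviving edges per weight and blow up the count; the product-of-two-uniforms idea lets a \emph{single} surviving edge-pair realize any target in $[-1/\sqrt{n_{in}},1/\sqrt{n_{in}}]$. The remaining care is in compounding the per-layer errors through depth uniformly over $\Xcal$ and in the sign bookkeeping at the input layer (and the absence of biases, which is what dictates the single-coordinate-reader gadget).
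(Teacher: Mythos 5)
Your proof is correct and follows the same high-level structure as the paper's (emulate each target layer by a pair of random layers; disjoint pools of candidate neurons per target weight; prune to a constant number of surviving connections per target entry; propagate the uniform error through depth using $\|W^{F(i)}\|_2\le 1$ and the $1$-Lipschitzness of $\sigma$). Where you genuinely diverge is in the probabilistic gadget used to hit a target scalar. The paper's Lemma~\ref{lem:one_coord} asks for \emph{both} the first-layer weight to be within $\epsilon'$ of $\pm\alpha$ \emph{and} the second-layer weight to be within $\epsilon'$ of $\pm 1$ simultaneously, which costs probability $\Theta(\epsilon'^2)$; it then reconstructs $a=\sigma(a)-\sigma(-a)$ at \emph{every} layer, paying two surviving edge-pairs per target entry throughout. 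You instead observe that for $i\ge 2$ the incoming activations $\tilde h$ are nonnegative ReLU outputs, so a single surviving connection through a positive first-layer weight lets you realize any target product $wv\approx W^{F(i)}_{km}$ directly; your random-product lemma gives probability $\Theta(\epsilon')$ rather than $\Theta(\epsilon'^2)$, and you only fall back to the two-connection $\sigma(x_m)-\sigma(-x_m)$ decomposition at the input layer where signs are unknown. This buys a slightly smaller overparameterization (width scaling like $1/\epsilon$ rather than $1/\epsilon^2$) and halves the surviving-edge count in the interior, at the cost of having to track the sign of $w$ and the nonnegativity invariant of $\tilde h$. Two small points to clean up if you write this out in full: your per-layer error bound should read $O(\sqrt{n\,n_{in}}\,\epsilon'\|\tilde h\|_2)$, which for the first layer is $O(\sqrt{nd}\,\epsilon')$ rather than $O(n\epsilon')$ when $d>n$ (this only changes the polynomial in the choice of $\epsilon'$); and your random-product lemma's universal constant requires $n_{in}\ge 2$ so that $|t|\le 1/\sqrt{n_{in}}$ leaves $t/a\in[-1,1]$ with constant probability over $a$ — the degenerate case $n_{in}=1$ (where the target can be $\pm1$) reverts to $\Theta(\epsilon'^2)$, which is still polynomial but worth a remark.
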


\begin{remark}
We note that the initialization scheme of the network considered in \thmref{thm:deep_network} is not standard Xavier initialization. The reason is that in standard Xavier initialization the weights are normalized such that the gradient's variance at initialization will not depend on the network's size. Here we don't calculate the gradient but only prune some of the neurons. Thus, the magnitude of the weights does not depend on the width of the network.
That said, the theorem can be easily extended to any initialization which is a uniform distribution on some interval around zero, by correctly scaling the network's output.
\end{remark}

 Since the number of parameters in the function $F$ is $dn+n^2(l-2)+n$, the above shows that the number of active weights in the pruned network is similar, up to a constant factor, to the number of parameters in $F$.
 Note that the width of the random network has polynomial dependence on the input dimension $d$, the width of the target network $n$ and its depth $l$. While the dependence on the width and depth of the target network is unavoidable, the dependence on the input dimension may seem to somewhat weaken the result. Since neural networks are often used on high-dimensional inputs, such dependence on the input dimension might make our result problematic for practical settings in the high-dimension regimes. However, we note that such dependence could be avoided, when making some additional assumptions on the target network. Specifically, if we assume that the target network has sparse weights in the first layer, i.e. - each neuron in the first layer has at most $s$ non-zero weights, then we get dependence on the sparsity $s$, rather than on the input dimension $d$. This is shown formally in the appendix.

The full proof of \thmref{thm:deep_network} can be found in \appref{append:proofs from pruning weights}, and here we give a sketch of the main arguments.
The basic building block of the proof is showing how to approximate a single ReLU neuron of the form $x \mapsto \sigma(\inner{w^*,x})$ by a two layer network. Using the equality $a = \sigma(a) - \sigma(-a)$, we can write the neuron as:
\begin{equation} \label{eqn:single_target}
x \mapsto \sigma\left(\sum_{i=1}^d w^*_i x_i\right) = 
\sigma\left(\sum_{i=1}^d \sigma(w^*_i x_i) - \sum_{i=1}^d\sigma(-w^*_i x_i)\right)
\end{equation}
Now, consider a two layer network of width $k$ and a single output neuron, with a pruning matrix $B$ for the first layer. It can be written as
$
x \mapsto \sigma\left(\sum_{j=1}^k u_j \sigma\left(\sum_{t=1}^d B_{j,t} W_{j,t} x_t\right)\right).
$
Suppose we pick, for every $i$, two indexes $j_1(i), j_2(i)$, and set the matrix $B$ s.t. $B_{j_1(i),i}, B_{j_2(i),i} = 1$ and all the rest of the elements of $B$ are zero. It follows that the pruned network can be rewritten as
\begin{equation} \label{eqn:pruning_for_single_target}
\begin{split}
x &\mapsto \sigma\left(\sum_{i=1}^d u_{j_1(i)} \sigma(W_{j_1(i),i} x_i) + \sum_{i=1}^d u_{j_2(i)} \sigma(W_{j_2(i),i} x_i) \right) \\
&= \sigma\left(\sum_{i=1}^d \sign(u_{j_1(i)}) \sigma(|u_{j_1(i)}|\,W_{j_1(i),i} x_i) + \sum_{i=1}^d \sign(u_{j_2(i)}) \sigma(|u_{j_2(i)}|\,W_{j_2(i),i} x_i) \right)  
\end{split}
\end{equation}
Comparing the right-hand sides of Equations \ref{eqn:single_target} and \ref{eqn:pruning_for_single_target}, we observe that they will be at most $\epsilon$ away from each other provided that for every $i$, $\sign(u_{j_1(i)}) \ne \sign(u_{j_2(i)})$, $\abs{\abs{u_{j_1(i)}}\,W_{j_1(i),i} - \sign(u_{j_1(i)})w^*_i} \le \epsilon/{2d}$ and $\abs{\abs{u_{j_2(i)}}\,W_{j_2(i),i} - \sign(u_{j_2(i)})w^*_i} \le \epsilon/{2d}$. Finally, fixing $i$ and picking $u_{j_1(i)}, u_{j_2(i)},W_{j_1(i),i}, W_{j_2(i),i}$ at random, the requirements would be fulfilled with probability of $\Omega(\epsilon/d)$. Hence, if $k \gg d/\epsilon$, for every $i$ we would be able to find an appropriate $j_1(i),j_2(i)$ with high probability. 
Note that the number of weights we are actually using is $2d$, which is only factor of $2$ larger than the number of original weights required to express a single neuron.

The construction above can be easily extended to show how a depth two ReLU network can approximate a single ReLU layer (simply apply the construction for every neuron). By stacking the approximations, we obtain an approximation of a full network. Since every layer in the original network requires two layers in the newly constructed pruned network, we require a twice deeper network than the original one. 

% First, consider the simple problem of approximating the function $x \mapsto \alpha x_i$ (i.e. projection on the $i$-th coordinate, multiplied by some scalar), with a sub-network of a depth two neural network. Notice that a random initialization of $\approx \frac{1}{\epsilon} \log(\frac{1}{\delta})$ neurons in the first layer is enough to ensure that with probability $\ge 1-\delta$, there exists a neuron with its $i$-th coordinate $\epsilon$-close to $\alpha$. Then, leaving only the $i$-th coordinate of the ``good'' neuron active makes the sub-network approximate this single-coordinate function. Now, since the function $x \mapsto \inner{w,x}$ is just a summation of $d$ such ``good'' neurons, requiring $\frac{\epsilon}{d}$ approximation of each coordinate, which could be done with a polynomial number of neurons, achieves a good approximation of such function. Then, we can use the second layer of the pruned neural network to sum these single-coordinate approximations, showing that a two-layer neural network of polynomial size can approximate a single neuron in the target network. Using such construction for each neuron, and multiplying it so that the third layer of the random network has a sub-network that approximates the second layer of the target network, achieves the approximation of the full network.

We can derive a slightly stronger result in the case where the target network is a depth-two network. Specifically, we can show that a depth-two network can be approximated by pruning a depth-three random network (rather than pruning a depth-four network, as implied from \thmref{thm:deep_network}):
% We start by considering the case of approximating a two-layer ReLU network. Specifically, let $f : \reals^d \to \reals$ be a width $n$ two-layer neural network, defined by:
% \[
% f(x) = \sum_{i=1}^n v_i^* \sigma(\inner{w^{(i)*}, x})
% \]
% where $w^{(1)*}, \dots, w^{(n)*} \in [-\frac{1}{\sqrt{d}}, \frac{1}{\sqrt{d}}]^d$, and $v_1^*, \dots, v_n^* \in [-1, 1]$.

%  We show that a three-layer network can be pruned to approximate the target function $f$. Let $g : \reals^d \to \reals$ be a width $k$ three-layer neural network, defined by:
% \[
% g(x) = u^\top \sigma( W_2 \sigma(W_1 x))
% \]
% for some $W_1 \in \reals^{k \times d}, W_2 \in \reals^{k \times k}$ and $u \in \reals^{k}$. 
% Then, a network $\tilde{g}$ is a \textbf{weight-subnetwork} of $g$ if there exist $B_1 \in \{0,1\}^{k \times d}, B_2 \in \{0,1\}^{k \times k}$ and $\beta \in \{0,1\}^{k}$ such that:
% \[
% \tilde{g}(x) = \tilde{u}^\top \sigma \left( \tilde{W}_2 \sigma(\tilde{W}_1 x)\right)
% \]
% where $\tilde{W}_1 = B_1 \odot W_1, \tilde{W}_2 = B_2 \odot W_2$ and $\tilde{u} = \beta \odot u$.

% Our main result in this section shows that for every two-layer network $f$, a random network of width polynomial in $d,n$ has with high probability a weight-subnetwork that approximates $f$. Furthermore, the number of active (non-zero) weights in this subnetwork is similar, up to a constant factor, to the number of parameters in $f$. This is captured in the following theorem:
\begin{theorem}
\label{thm:weight_pruning}
Fix some target two-layer neural network $F$ of width $n$, and fix $\epsilon, \delta \in (0,1)$.
Let $G$ be a random three-layer neural network of width $\mathrm{poly}\left(d,n,\frac{1}{\epsilon}, \log\left(\frac{1}{\delta}\right)\right)$, with weights initialized from $U([-1,1])$. Then, with probability at least $1-\delta$, there exists a weight-subnetwork $\widetilde{G}$ of $G$, such that:
\[
\sup_{x \in \mathcal{X}}\abs{F(x) - \widetilde{G}(x)} \le \epsilon
\]
Furthermore, the number of active (non-zero) weights in $\widetilde{G}$ is $O(dn)$.
\end{theorem}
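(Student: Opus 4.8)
The plan is to reduce to the single-neuron building block used in the proof sketch of \thmref{thm:deep_network}, while exploiting the fact that the output layer of a two-layer target is \emph{linear}, so that --- unlike in \thmref{thm:deep_network} --- it need not cost an extra pair of layers. Write the target as $F(x) = \sum_{j=1}^n v_j\sigma(\inner{w^{*(j)},x})$. Using positive homogeneity of $\sigma$, rewrite $v_j\sigma(\inner{w^{*(j)},x}) = \sign(v_j)\,\sigma(\inner{|v_j| w^{*(j)},x})$, so $F(x) = \sum_j \sign(v_j)\,\sigma(\inner{\bar w^{(j)},x})$ with $\bar w^{(j)} := |v_j| w^{*(j)}$. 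The three layers of the random network $G$ will play the roles: layer $1$ produces single-coordinate features $\sigma(W^{(1)}_{p,i} x_i)$ (obtained by pruning a layer-$1$ neuron to a single coordinate); layer $2$ assembles, for each $j$, one neuron $q_j$ that approximates $\sigma(\inner{\bar w^{(j)},x})$ up to a positive scalar; and the linear layer $3$ reads out the $n$ chosen neurons $q_1,\dots,q_n$ and sums them.

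The crux is that we cannot simply prune the output layer to leave weights $\sign(v_j)$, since the surviving entries $c_j := W^{(3)}_{1,q_j}$ are still random in $[-1,1]$. Instead we condition on $q_j$ having $\sign(c_j)=\sign(v_j)$ and $|c_j|\ge \tfrac12$ (probability $\tfrac14$ per layer-$2$ neuron; since the width is $\gg n$ we can select $n$ distinct such neurons, as many with $c_j\ge\tfrac12$ as there are positive $v_j$'s and likewise for the negatives), and fold the magnitude into the inner construction: it suffices that $h_{q_j}(x)\approx \sigma(\inner{\hat w^{(j)},x})$ for $\hat w^{(j)} := \bar w^{(j)}/|c_j|$, since then $c_j h_{q_j}(x) \approx \sign(v_j)\,|c_j|\,\sigma(\inner{\hat w^{(j)},x}) = \sign(v_j)\,\sigma(\inner{\bar w^{(j)},x})$, and $\norm{\hat w^{(j)}}\le 2\norm{\bar w^{(j)}}$ remains bounded. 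Approximating $\sigma(\inner{\hat w^{(j)},x})$ by neuron $q_j$ is then the single-neuron building block: by $a=\sigma(a)-\sigma(-a)$ coordinatewise, $\sigma(\inner{\hat w^{(j)},x}) = \sigma\big(\sum_i \sigma(\hat w^{(j)}_i x_i) - \sum_i \sigma(-\hat w^{(j)}_i x_i)\big)$, and each term $\pm\sigma(\pm\hat w^{(j)}_i x_i)$ is realized by routing a layer-$1$ neuron $p$ (pruned to coordinate $i$) into $q_j$, because $W^{(2)}_{q_j,p}\sigma(W^{(1)}_{p,i}x_i) = \sign(W^{(2)}_{q_j,p})\,\sigma(|W^{(2)}_{q_j,p}|W^{(1)}_{p,i}x_i)$, so the effective weight $|W^{(2)}_{q_j,p}|W^{(1)}_{p,i}$ is tunable through the choice of $p$. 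One wrinkle: the product of a $U([0,1])$ and a $U([-1,1])$ variable only ranges over $[-1,1]$, so any term with $|\hat w^{(j)}_i|>\tfrac12$ is split into $\lceil 2|\hat w^{(j)}_i|\rceil = O(1)$ copies of effective-weight magnitude $\le\tfrac12$; this inflates the active-weight count only by a constant, keeping it $O(dn)$.

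The probabilistic core is an anti-concentration fact replacing the one used in the base case: for $A\sim U([0,1])$, $B\sim U([-1,1])$ and any $t$ with $|t|\le\tfrac12$, $\prob{AB\in[t-\epsilon',t+\epsilon']} = \Omega(\epsilon')$, since the density of $AB$ equals $\tfrac12\ln(1/|u|)$ on $(-1,1)$, which is bounded below by $\tfrac12\ln 2$ on $[-\tfrac12,\tfrac12]$. Hence a fixed layer-$1$ neuron meets a given requirement (correct sign of $W^{(2)}_{q_j,p}$, correct effective weight up to $\epsilon'$) with probability $p_0=\Omega(\epsilon')$, independently over neurons. Processing the $n$ blocks sequentially with disjoint sets of layer-$1$ neurons --- there are $O(dn)$ requirements in total and at least $N_1/2$ unused layer-$1$ neurons at each step provided $N_1\gg dn$ --- a union bound gives that all requirements are met with probability $1-\delta$ once $N_1 = \Omega\big(\tfrac1{\epsilon'}\log\tfrac{dn}{\delta}\big)$, and a layer-$2$ width $\gg n\log(n/\delta)$ suffices for selecting the $q_j$'s. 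Taking $\epsilon' = \Theta(\epsilon/(dn))$ and using $1$-Lipschitzness of $\sigma$ with $\norm{x}\le 1$, the per-copy errors of size $\le\epsilon'$, the $O(d)$ copies per block, and the $n$ blocks accumulate to $\sup_{x\in\Xcal}|F(x)-\widetilde G(x)| \le O(dn)\,\epsilon' \le \epsilon$; the widths are $\mathrm{poly}(d,n,1/\epsilon,\log(1/\delta))$ and the active weights number $O(dn)$ in each of layers $1$ and $2$ and $n$ in layer $3$.

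The main obstacle is the point in the second paragraph: the surviving output edge carries a random, non-unit magnitude, and absorbing it into the inner construction --- rather than spending an extra pair of layers to realize the linear output map as a width-$n$ ReLU layer (which is what \thmref{thm:deep_network} would prescribe) --- is exactly what lets three layers suffice in place of four, and it is what forces the use of the product anti-concentration bound in place of the single-uniform bound used in the base case.
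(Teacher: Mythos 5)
Your proof is correct and shares the same three-layer architecture as the paper's (layer $1$ extracts single coordinates, layer $2$ reassembles per-neuron dot products and applies $\sigma$, the linear layer $3$ reads them out), but it departs from the paper's appendix proof in how the output edge and the anti-concentration are handled. The paper reduces \thmref{thm:weight_pruning} to Lemma \ref{lem:relu_network}, whose single-neuron case (Lemma \ref{lem:one_neuron}) searches for a layer-$3$ weight $v_i$ with $\abs{v_i - v^*}\le\epsilon'$, an $\Omega(\epsilon')$-probability event per neuron; you instead condition on the survivor $c_j$ having the correct sign and $\abs{c_j}\ge\tfrac12$ (constant probability), and absorb the unknown magnitude by rescaling the inner target to $\hat w^{(j)} = \bar w^{(j)}/\abs{c_j}$. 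This is a genuinely different, and slightly tighter, resolution of the same obstacle: both proofs exploit that the output layer is linear on both sides so that no extra ReLU pair is spent, but the paper tunes $v_i$ to $v^*$ while you fix its sign and tune the inner weights. Similarly, your anti-concentration step uses the density of the product $\abs{U}\cdot W$ of independent uniforms (giving $\Omega(\epsilon')$ per candidate layer-$1$ neuron), which matches the informal sketch in the main text; the appendix proof of Lemma \ref{lem:one_coord} instead bounds the simpler joint event $\{\abs{w_i^{(j)}-\alpha}\le\epsilon'\}\cap\{\abs{u^{(j)}\mp 1}\le\epsilon'\}$, costing only $\Omega(\epsilon'^2)$ per candidate. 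Both routes give $\mathrm{poly}(d,n,1/\epsilon,\log(1/\delta))$ width and $O(dn)$ active weights; yours saves an $\epsilon$-factor in the layer-$1$ and layer-$3$ widths, at the cost of a marginally more delicate argument (the product density and the $O(1)$-copy splitting when $\abs{\hat w^{(j)}_i}>\tfrac12$).

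Two small checks worth making explicit if you flesh this out: (i) the rescaled target $\hat w^{(j)}$ is a function of the layer-$3$ randomness $c_j$, but the layer-$1$/layer-$2$ weights you anti-concentrate are independent of it, so conditioning on $c_j$ before applying the product bound is legitimate; and (ii) under the paper's weight-normalization assumptions (from the formal statement of \thmref{thm:deep_network}), $\abs{\hat w^{(j)}_i}\le 2\abs{v_j}\abs{w^{*(j)}_i}\le 2/\sqrt{nd}$, so the splitting step is in fact vacuous and $\abs{\hat w^{(j)}_i}\le\tfrac12$ holds outright for $nd\ge 16$.
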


\subsection{Universality and Computational Efficiency of Pruning}

We showed that in terms of expressive power, pruning weights in a randomly initialized over-parameterized network can approximate a target ReLU network of any depth. Using well-known results in the literature of neural networks, this result implies two interesting corollaries:

\paragraph{Universal Approximation Using Weight Pruning} It has been long known that neural networks are universal approximators: they are able to approximate an arbitrary function up to arbitrary accuracy (for example, see \cite{stinchcombe1989universal, scarselli1998universal}). Since we show that pruning a network with random weights can approximate any target network, this implies that pruning a random network is also a universal approximation scheme.

\paragraph{Pruning Weights is Computationally Hard} It is well known in the literature of neural networks that learning even a depth-two ReLU network is computationally hard in the general case (see \cite{livni2014computational, manurangsi2018computational, boob2018complexity}). From these results, it is immediate that weight-pruning of random ReLU networks, deep or shallow, is computationally hard as well.
Indeed, if we had an efficient algorithm that finds an optimal weight-subnetwork of a three-layer network, from \thmref{thm:weight_pruning} this algorithm approximates the best depth-two network (for some fixed width). But in general, approximating the best depth-two network on an arbitrary distribution is computationally hard (under certain hardness assumptions), which leads to a contradiction.
So, there is no efficient algorithm that is guaranteed to return an optimal weight-subnetwork for any input distribution.

\section{Equivalence Between Pruning Neurons and Random Features}\label{sec:equivalence}
In this section we analyze the power of pruning entire neurons in a depth-two network. 
The main question we are interested in is the following: suppose that a function $f$ can be well approximated by a depth-two network $g$ of polynomial width (in the relevant parameters). Is the function $f$ also well approximated by pruning entire neurons of a randomly initialized depth-two network of a polynomial width?
Here we show that the answer is negative, and in fact pruning entire neurons is equivalent to the well known random features model (e.g. \cite{rahimi2008random}, \cite{rahimi2008uniform}). Intuitively, we show that whenever training only the last layer of the network suffices, it is also possible to construct a good sub-network by pruning entire neurons.

Formally, consider a width $k$ two-layer neural network defined by $g:\mathbb{R}^d \rightarrow \mathbb{R}$ as follows:
\[
g(x) = u^\top \sigma(Wx) = \sum_{i=1}^k u_i\sigma(\inner{w_i,x})
\]
where $u_i$ is the $i$-th coordinate of $u$ and $w_i$ is the $i$-th row of $W$. A network $\tilde{g}$ is a \textbf{neuron-subnetwork} of $g$ if there exists a vector $b\in\{0,1\}^k$ such that:
\[
\tilde{g}(x) = (u\odot b)^\top \sigma(Wx) = \sum_{i=1}^k (u_i \cdot b_i) \sigma(\inner{w_i,x}).
\]
So, $\tilde{g}$ is also a $2$-layer neural network, which contains a subset of the neuron of $g$. Next, we define the random features model:

\begin{definition}
Suppose we sample $w_1,\dots,w_k \sim D$ from some distribution $D$, a \textbf{random features model} over $w_1,\dots,w_k$ and activation $\sigma$ is any function of the form:
\[
f(x) = \sum_{i=1}^k u_i \sigma(\inner{w_i,x})
\]
for $u_1,\dots,u_k\in\mathbb{R}$.
\end{definition}

Training a $2$-layer random features model is done by training only the second layer, i.e. training only the weights $u_1,\dots,u_k$. This is equivalent to training a linear model over the features $\sigma(\inner{w_i,x})$, which are chosen randomly. We show that neuron-subnetworks are competitive with random features:

\begin{theorem}\label{thm:equivalence nn rf}
Let $D$ be any distribution over $\mathcal{X} \times [-1,+1]$, and let $\sigma:\mathbb{R}\rightarrow\mathbb{R}$ be $L$-Lipschitz with $\sigma(0)\leq L$. Let $\epsilon,\delta > 0,~n\in \mathbb{N}$ and $D^*$ a distribution over $\{w:\norm{w}\leq 1\}$ such that for $w_1,\dots,w_n \sim D^*$ w.p $> 1-\delta$ there exist $u_1,\dots,u_n\in\mathbb{R}$ such that $|u_i| \leq C$ and the function $f(x) = \sum_{i=1}^n u_i\sigma(\inner{w_i,x})$ satisfies that $L_D(f) \leq \epsilon$. Let $k\geq poly\left(C,n,L,\frac{1}{\epsilon}, \frac{1}{\delta}\right)$, and suppose we initialize a $2$-layer neural network $g$ with width $k$ where $w_i\sim D^* $, and $u_i\sim U([-1,1])$. Then there exists a neuron-subnetwork $\tilde{g}$ of $g$ and constant $c>0$ such that $L_D(c\tilde{g}) \leq \epsilon$.
\end{theorem}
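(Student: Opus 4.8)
The plan is to realize the target random-features classifier not by a single pruned block but as an \emph{average} of many independent pruned blocks, exploiting convexity of the squared loss together with a concentration argument. Split the $k=nN$ neurons of $g$ into $N$ blocks of width $n$, the $s$-th block having weights $w^{(s)}_1,\dots,w^{(s)}_n\sim D^*$ and second-layer coefficients $u^{(s)}_1,\dots,u^{(s)}_n\sim U([-1,1])$. Call block $s$ \emph{good} if its weights admit coefficients $u^{*(s)}_j$ with $|u^{*(s)}_j|\le C$ and $L_D(f_s)\le\epsilon$, where $f_s(x)=\sum_{j=1}^n u^{*(s)}_j\,\sigma(\inner{w^{(s)}_j,x})$; by hypothesis each block is good independently with probability $>1-\delta$, so for $N$ polynomially large a $(1-\delta)/2$ fraction of blocks are good with overwhelming probability. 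The subnetwork $\widetilde G$ will keep neurons only inside good blocks, and I will choose $c>0$ so that $c\widetilde G=\frac1{|T|}\sum_{s\in T}g_s$, where $T$ is the set of good blocks and $g_s$ is a scaled pruning of block $s$; since each $f_s$ has loss $\le\epsilon$, Jensen gives $L_D\!\big(\frac1{|T|}\sum_{s\in T}f_s\big)\le\epsilon$, and it remains to make $g_s$ an unbiased, low-variance estimate of $f_s$.

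Here is the key step: inside a good block $s$, keep neuron $j$ precisely when $u^{(s)}_j$ falls in the window $\big[\tfrac{u^{*(s)}_j}{2C}-\tfrac12,\ \tfrac{u^{*(s)}_j}{2C}+\tfrac12\big]$ (which lies in $[-1,1]$ since $|u^{*(s)}_j|\le C$), and set $g_s(x)=4C\sum_{j:\,u^{(s)}_j\text{ kept}} u^{(s)}_j\,\sigma(\inner{w^{(s)}_j,x})$. A one-line computation gives $\E_{u^{(s)}_j}\!\big[u^{(s)}_j\,\mathbbm{1}[u^{(s)}_j\text{ kept}]\big]=\tfrac{u^{*(s)}_j}{4C}$ — the window being an interval of \emph{fixed} width makes this expectation linear in $u^{*(s)}_j$, which is exactly what lets us reproduce magnitudes and not merely signs — so $\E_{u^{(s)}}[g_s(x)]=f_s(x)$ for every $x$. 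The same computation bounds $\mathrm{Var}_{u^{(s)}}\!\big(u^{(s)}_j\mathbbm{1}[\cdot]\big)\le\tfrac12$, and using $|\sigma(\inner{w^{(s)}_j,x})|\le 2L$ (from $L$-Lipschitzness, $\sigma(0)\le L$, $\norm{w^{(s)}_j}\le1$, $\norm{x}\le1$) together with independence across $j$ yields $\mathrm{Var}_{u^{(s)}}(g_s(x))\le 32C^2nL^2=:V$.

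With $\bar g:=c\widetilde G=\frac1{|T|}\sum_{s\in T}g_s$ and the blocks independent, for every fixed $(x,y)$ we get $\E_{u}[(\bar g(x)-y)^2]=\mathrm{Var}_u(\bar g(x))+(\bar f(x)-y)^2$, where $\bar f:=\frac1{|T|}\sum_{s\in T}f_s$ and $\mathrm{Var}_u(\bar g(x))\le V/|T|$; averaging over $D$ gives $\E_u[L_D(\bar g)]\le \epsilon+V/|T|$. Since $\bar g$ depends on the $n|T|$ i.i.d. coefficients through bounded-difference increments of size $O(CL/|T|)$ (and $\|\bar g\|_\infty=O(CnL)$), McDiarmid's inequality concentrates $L_D(\bar g)$ around its mean, so a polynomial $|T|\ge \mathrm{poly}(C,n,L,\tfrac1\epsilon,\log\tfrac1{\delta_0})$ forces $L_D(\bar g)\le\epsilon+o(1)$ with probability $\ge 1-\delta_0$ over the initialization; choosing $N=\mathrm{poly}(\cdot)$ so that $|T|\ge(1-\delta)N/2$ is that large (Chernoff) and taking a union bound completes the argument, with $k=nN=\mathrm{poly}(C,n,L,\tfrac1\epsilon,\tfrac1\delta)$. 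The residual $o(1)$ is removed in the usual way by invoking the random-features hypothesis at accuracy $\epsilon/2$ and enlarging the polynomial.

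The main obstacle is the design in the second paragraph: we may only \emph{delete} neurons, never rescale them, so the effective coefficient on each kept feature is the frozen value $c\,u^{(s)}_j$; naive rules (e.g. keeping a neuron when $\sign(u^{(s)}_j)=\sign(u^{*(s)}_j)$) reproduce only the signs of a good solution, while per-block exact matching of all $n$ magnitudes costs $\exp(n)$ width. The fixed-width window trick is what makes the pruned block an \emph{unbiased} estimator of $f_s$ rather than of a sign-pattern caricature, reducing everything to a variance-averaging argument; verifying the bounded-difference constants and that the construction is measurable in the fixed (but arbitrary) good solution $u^{*(s)}$ is then routine.
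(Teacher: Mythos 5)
Your proof is correct, and it takes a genuinely different route at the level of the key lemma. Both you and the paper split the $k$ neurons into blocks of width $n$, use Chernoff to argue that most blocks admit a good random-features solution $u^{*(s)}$, prune each good block by a rule that compares the random second-layer weight $u^{(s)}_j$ to the target $u^{*(s)}_j$, and finish by averaging blocks and invoking concentration. The difference lies in the pruning window and the resulting analysis. The paper's Lemma B.2 keeps neuron $j$ only when $u^{(s)}_j$ falls in a \emph{narrow} window of radius $\Theta(\epsilon')$ around the normalized target, so that the surviving weight is already deterministically close to the target; it then introduces an auxiliary unbiased surrogate $\hat v$ built from the target weights and the mask, bounds $\sup_x|\hat g-\hat v\cdot z|$ by Hoeffding, and bounds $\sup_x|\hat v\cdot z - \bar v\cdot z|$ by a separate Rademacher/McDiarmid lemma, giving a sup-norm guarantee that is then plugged into $L_D$. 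Your construction instead uses a \emph{fixed-width} window $\bigl[\tfrac{u^{*}_j}{2C}-\tfrac12,\tfrac{u^{*}_j}{2C}+\tfrac12\bigr]$ that makes the raw pruned output $4C\,u^{(s)}_j\mathbbm{1}[\text{kept}]$ itself an unbiased estimator of $u^{*(s)}_j$ (no surrogate $\hat v$ needed), and then controls $L_D$ directly via the bias--variance identity $\E[L_D(\bar g)]=\E_D[\mathrm{Var}(\bar g(x))]+L_D(\bar f)$, Jensen for $L_D(\bar f)$, and a single McDiarmid step. The trade-off: your route is more direct and gives cleaner constants in terms of $C$ (scaling factor $4C$ versus the paper's $\Theta(k_1L/\epsilon)$), while the paper's narrow-window argument yields a sup-norm bound, which is reusable elsewhere in their appendix (e.g.\ Theorem 3.5); for Theorem 3.3 alone, which only asks for an $L_D$ guarantee, your bias--variance version is the more economical argument.

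One small bookkeeping note: as stated the theorem's conclusion is missing a ``with probability $>1-\delta$ over the initialization'' qualifier, and both your proof and the paper's correctly supply it; you should also state explicitly that the McDiarmid step is applied conditionally on the first-layer weights (hence on $T$ and the chosen $u^{*(s)}$), which is what makes the mask a measurable, bounded-difference function of the i.i.d.\ second-layer coefficients. These are routine and do not affect correctness.
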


The full proof can be found in \appref{appen:proofs from sec equivalence}. \thmref{thm:equivalence nn rf} shows that for any distribution over the data, if a random features model can achieve small loss, then it is also possible to find a neuron-subnetwork of a randomly initialized network (with enough width) that achieves the same loss. This means that pruning neurons is competitive with the random features model. On the other hand, if for some distribution over the data it is possible to find a neuron-subnetwork of a randomly initialized network that achieves small loss, then clearly it is possible to find a random features model that achieves the same loss. Indeed, we can set the weights of the random features model to be the same as in the neuron-subnetwork, where pruned weights are equal to zero.

To summarize, \thmref{thm:equivalence nn rf} and the argument above shows an equivalence between random features and neuron-subnetworks: For a distribution $D$, there is a random features model $f$ with $k$ features such that $L_D(f) \leq \epsilon$ if-and-only-if for a randomly initialized network with width polynomial in $k, \frac{1}{\epsilon}$ and $\frac{1}{\delta}$, w.p $>1-\delta$ there exists a neuron-subnetwork $\tilde{g}$ such that $L_D(\tilde{g}) \leq \epsilon$.

A few recent works (e.g. \cite{yehudai2019power}, \cite{ghorbani2019linearized}, \cite{ghorbani2019limitations}) studied the limitations of random features. In particular, \cite{yehudai2019power} show that a random features model cannot approximate a single ReLU neuron even under standard Gaussian distribution, unless the amount of features or the magnitude of the weights (or both) are exponential in the input dimension. Thus, the above equivalence also shows a limitation of neuron-subnetworks - they cannot efficiently approximate a single ReLU neuron, just as random features can't. This means that the weight-subnetwork model shown in \secref{sec:pruning_weights} is significantly stronger than the neuron-subnetwork model.

The intuition behind the proof of \thmref{thm:equivalence nn rf} is the following: Assume we initialize a $2$-layer neural network of width $n=k\cdot m$ where $k$ is as in the theorem, and $m$ is some large number (that depends on $\frac{1}{\epsilon}, \frac{1}{\delta}$). We think of it as initializing $m$ different networks of width $k$, and from the assumption, for most of these networks there exists a random features model that achieves small loss. For each of these networks we prune a neuron if its randomly initialized weight in the second layer is far from its corresponding random features model's weight. Note that since we initialize the weights i.i.d., then we prune each neuron with the same probability and independently of the other neurons. To finish the proof, we use a concentration of measure argument to show that averaging many such pruned networks competes with the random features model, and thus also achieves small loss on the input distribution.

% \corollaryref{cor:equivalence prun-rand} also highlights some of the limitations of pruning neurons. In \cite{yehudai2019power} they show that a random features model cannot approximate data coming from a standard Gaussian distribution, and labels generated by a single ReLU neuron, unless the amount of features, or magnitude of the weights of the second layer (or both) are exponential in the dimension of the data. This is shown in the following:
% \begin{corollary}
% Let $\sigma$ be an $L$-Lipschitz function with $L=o(2^d)$, and suppose we initialize a two-layer network of width $n$:
% \[
% \sum_{i=1}^n u_i\sigma(\inner{w_i,x})
% \]
% with $w_i\sim U\left(\left[-\frac{1}{\sqrt{d}}, \frac{1}{\sqrt{d}} \right]^d\right)$, $u_i\sim U([-1,1])$. Then there is $c^*\in\mathbb{R}$ such that for every $w^*\in\mathbb{R}^d$ with $\norm{w^*} = d^2$ and for every $b\in \{0,1\}^n$, w.h.p over sampling of $w_1,\dots,w_n$, if:
% \[
% \int_{x\sim \mathcal{N}(0,I)}\left( \sum_{i=1}^n (u_i\cdot b_i)\sigma(\inner{w_i,x}) - [\inner{w^*,x} + c^*]_+ \right)^2 \leq \frac{1}{50}
% \]
% then $n = \Omega(2^d)$.
% \end{corollary}
% \begin{proof}
% By Theorem ??? from \note{add reference}, the above is true for every random features model, thus also true for the specific instantiation of the network, hence also to all of its sub-networks. Note that $|u_i|\leq 1$ for every $i$, thus proving the required bound on $n$.
% \end{proof}

\subsection{Learning Finite Datasets and RKHS Functions via Neuron-Subnetworks}\label{sec:pruning neurons}
In this subsection we show that pruning entire neurons may prove beneficial, despite the inherent limitations discussed previously. We focus on two popular families of problems, which are known to be solvable by training depth-two networks:
\begin{enumerate}
    \item Overfitting a finite sample: $S = \left\{(x_1,y_1),\dots,(x_m,y_m) \in \mathcal{X} \times [-1,1]\right\}$. This is equivalent to finding a neuron-subnetwork which minimizes the empirical risk on the sample $S$. This setting is considered in various recent works (for example in \cite{du2018gradient2}, \cite{du2018gradient}, \cite{allen2018convergence}).
    \item Learning RKHS: given an activation function $\sigma:\mathbb{R}\rightarrow\mathbb{R}$ we consider a target function from the following set of functions:
    \[
    \mathcal{F}_C = \left\{ f(x) = c_d\int_{w\in \left[-\frac{1}{\sqrt{d}}, \frac{1}{\sqrt{d}} \right]^d} h(w)\sigma(\inner{w,x})dw: \sup_w |h(w)| \leq C \right\}
    \]
    where $c_d = \left(\frac{\sqrt{d}}{2}\right)^d$ is a normalization term. The set $\mathcal{F}_\infty$ is actually the RKHS of the kernel $K(x,y) = \mathbb{E}_{w\in U\left(\left[-\frac{1}{\sqrt{d}}, \frac{1}{\sqrt{d}} \right]^d\right)}\left[\sigma(\inner{w,x})\cdot\sigma({w,y})\right]$. In particular, for $\sigma$ which is not a polynomial, the set $\mathcal{F}_\infty$ contains all continuous functions (see \cite{leshno1993multilayer}). This setting is considered in \cite{cao2019generalization}, \cite{sun2018random}.
\end{enumerate}

The main theorem of this section is the following:
\begin{theorem}\label{thm:neruon subnetwrok two variations}
Let $\epsilon,\delta >0$ and let $\sigma:\mathbb{R}\rightarrow\mathbb{R}$ be $L$-Lipschitz with $\sigma(0)\leq L$.
Let $g$ be a randomly initialized $2$-layer neural network of width $k$ such that $w_i\sim U\left(\left[-\frac{1}{\sqrt{d}}, \frac{1}{\sqrt{d}} \right]^d\right) $, and $u_i\sim U([-1,1])$.
\begin{enumerate}
    \item (Finite dataset) Let $S = \left\{(x_1,y_1),\dots,(x_m,y_m)\in\mathcal{X}\times [-1,+1]\right\}$. Let $H$ be the $m\times m$ matrix defined by $H_{i,j} = \mathbb{E}_w\left[ \sigma(\inner{w,x_i})\sigma(\inner{w,x_j})\right]$ and assume that $\lambda_{\min}(H) = \lambda > 0$. If $k \geq poly\left(m, \frac{1}{\lambda}, L, log\left(\frac{1}{\delta}\right), \frac{1}{\epsilon}\right)$ then w.p $> 1- \delta$ there exists a neuron-subnetwork  $\tilde{g}$ and a constant $c > 0$ such that:
    \[
    \sup_{i=1,\dots,m} \abs{c\tilde{g}(x_i) - y_i} \leq \epsilon
    \]
    \item (RKHS function)  Let $f \in \mathcal{F}_C$. If $k \geq poly\left(C, L, log\left(\frac{1}{\delta}\right), \frac{1}{\epsilon}\right)$
    then w.p $>1-\delta$ there exists a neuron-subnetwork $\tilde{g}$ and a constant $c>0$ such that:
    \[
    \sup_{x \in \mathcal{X}} \abs{c\tilde{g}(x) - f(x)} \leq \epsilon
    \]
\end{enumerate}
\end{theorem}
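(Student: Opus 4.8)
The plan is to reduce both statements to \thmref{thm:equivalence nn rf}: in each of the two settings I will exhibit a random features model (over the same weight distribution $D^\star$ used to initialize $g$) that achieves the required accuracy and has coefficients bounded by a polynomial in the problem parameters, and then convert the $L_D$-guarantee supplied by \thmref{thm:equivalence nn rf} into the stated $\sup$-bound. Throughout, note that $D^\star = U([-\tfrac{1}{\sqrt d},\tfrac{1}{\sqrt d}]^d)$ is supported on $\{w:\norm{w}\le 1\}$, since $\norm{w}_2^2 \le d\cdot\tfrac1d = 1$, so it is a legal choice in \thmref{thm:equivalence nn rf}; also $|\sigma(\inner{w,x})|\le|\sigma(0)| + L\norm{w}\norm{x}\le 2L$ on $\Xcal$.

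\textbf{Part 1 (finite dataset).} Let $D$ be the uniform distribution on $\{(x_1,y_1),\dots,(x_m,y_m)\}$. First I would show that for $n=\mathrm{poly}(m,\tfrac1\lambda,L,\log\tfrac1\delta)$ i.i.d.\ draws $w_1,\dots,w_n\sim D^\star$, the empirical Gram matrix $\widehat H$ with $\widehat H_{j,j'}=\tfrac1n\sum_i\sigma(\inner{w_i,x_j})\sigma(\inner{w_i,x_{j'}})$ satisfies $\norm{\widehat H-H}_2\le\lambda/2$ with probability $\ge 1-\delta$ — each entry is an average of i.i.d.\ terms bounded by $4L^2$, so Hoeffding plus a union bound over the $m^2$ entries suffices. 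On this event $\lambda_{\min}(\widehat H)\ge\lambda/2>0$, so the underdetermined system $\sum_i u_i\sigma(\inner{w_i,x_j})=y_j$ admits the minimum-norm solution $u=\widehat\Phi^\top(\widehat\Phi\widehat\Phi^\top)^{-1}y$, where $\widehat\Phi_{j,i}=\sigma(\inner{w_i,x_j})$ and $\widehat\Phi\widehat\Phi^\top=n\widehat H$; bounding operator norms ($\norm{\widehat\Phi}_2\le 2L\sqrt{mn}$, $\norm{(\widehat\Phi\widehat\Phi^\top)^{-1}}_2\le \tfrac{2}{n\lambda}$, $\norm{y}_2\le\sqrt m$) gives $\norm{u}_\infty\le\norm{u}_2=O\!\big(\tfrac{Lm}{\lambda\sqrt n}\big)\le C'$ with $C'=\mathrm{poly}(m,\tfrac1\lambda,L)$. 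Thus the random features model $f(x)=\sum_i u_i\sigma(\inner{w_i,x})$ has $L_D(f)=0$ and $|u_i|\le C'$, so \thmref{thm:equivalence nn rf} applied with accuracy parameter $\epsilon^2/m$ (which keeps $k$ polynomial in the stated quantities) yields a neuron-subnetwork $\tilde g$ and $c>0$ with $L_D(c\tilde g)\le\epsilon^2/m$, i.e.\ $\tfrac1m\sum_j(c\tilde g(x_j)-y_j)^2\le\epsilon^2/m$, hence $\max_j|c\tilde g(x_j)-y_j|\le\epsilon$.

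\textbf{Part 2 (RKHS function).} Since $c_d$ is exactly the normalizing constant of the uniform density on $[-\tfrac1{\sqrt d},\tfrac1{\sqrt d}]^d$, every $f\in\mathcal{F}_C$ can be written as $f(x)=\E_{w\sim D^\star}[h(w)\sigma(\inner{w,x})]$ with $\sup_w|h(w)|\le C$. I would first approximate $f$ \emph{uniformly} by a random features model: draw $w_1,\dots,w_n\sim D^\star$, set $\hat u_i=h(w_i)/n$ (so $|\hat u_i|\le C/n\le C$), and let $\hat f=\sum_i\hat u_i\sigma(\inner{w_i,\cdot})$. A symmetrization argument bounds $\E\sup_{x\in\Xcal}|\hat f(x)-f(x)|$ by a constant times the Rademacher complexity of $\{x\mapsto\sigma(\inner{w,x}):\norm{w}\le1\}$, which by Ledoux--Talagrand contraction and the dimension-free bound $O(1/\sqrt n)$ for linear functionals on the unit ball is $O(L/\sqrt n)$; a bounded-differences step then gives $\sup_{x\in\Xcal}|\hat f(x)-f(x)|\le\epsilon'$ w.p.\ $\ge1-\delta$ for $n=\mathrm{poly}(C,L,\log\tfrac1\delta,\tfrac1{\epsilon'})$ — with no dependence on $d$. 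Feeding this bounded-coefficient model into the construction behind \thmref{thm:equivalence nn rf} — partition the $k$ neurons into many i.i.d.\ blocks, prune within each block the neurons whose second-layer weight is far from the target coefficient $\hat u_i$, and average — and running the final concentration step in $\sup$-norm over $\Xcal$ rather than in $L_D$ (again via the dimension-free Rademacher bound for the features across the independent blocks), produces the desired $\tilde g$ with $\sup_{x\in\Xcal}|c\tilde g(x)-f(x)|\le\epsilon$.

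\textbf{Main obstacle.} For Part 1, the crux is that the interpolating coefficients be polynomially bounded; this is precisely where $\lambda_{\min}(H)=\lambda>0$ enters, and one must check that feeding the resulting bound $C'$ back into \thmref{thm:equivalence nn rf} still leaves $k$ polynomial (it does, since $C'$ and the required number of features $n$ are themselves $\mathrm{poly}(m,\tfrac1\lambda,L,\log\tfrac1\delta)$). For Part 2, the delicate point is obtaining a genuine $\sup_x$-guarantee \emph{without} any dependence on the ambient dimension $d$: this rules out the naive route of covering $\Xcal$ (whose metric entropy is $\Theta(d)$), and forces use of the dimension-free Rademacher complexity of ridge functions on the unit ball — both in the random-features approximation of $f$ and in carrying the averaging argument of \thmref{thm:equivalence nn rf} through in sup-norm.
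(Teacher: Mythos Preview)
Your overall strategy matches the paper's: in both settings you first exhibit a bounded-coefficient random features model and then invoke the ``pruning $\approx$ random features'' machinery. The construction of the random-features interpolant in Part~1 (empirical Gram concentration, then minimum-norm solution with $\|u\|_\infty=O(Lm/\lambda)$) and the uniform random-features approximation in Part~2 (write $f(x)=\E_{w\sim D^\star}[h(w)\sigma(\inner{w,x})]$, sample, and use a dimension-free Rademacher bound) are essentially the paper's \propref{prop:random features approximate data} and \thmref{thm:random_features}.

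There is, however, a genuine quantitative gap in Part~1. You apply \thmref{thm:equivalence nn rf} as a black box, and then assert that ``$k$ stays polynomial in the stated quantities.'' But the stated quantities include $\log(1/\delta)$, whereas the width in \thmref{thm:equivalence nn rf} scales as $\mathrm{poly}(1/\delta)$, not $\mathrm{poly}(\log(1/\delta))$: in its proof the number of blocks is $m_0\cdot\frac{\log(1/\delta)}{2\delta^3}$, precisely because the hypothesis only guarantees each block of random features succeeds with probability $1-\delta$ for the \emph{same} $\delta$ that appears in the width. You cannot decouple these two roles of $\delta$ when using the theorem as a black box: if you shrink the confidence parameter to hit the final target, the $1/\delta$ in the width blows up; if you keep it constant, the conclusion does not hold with probability $1-\delta$. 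So as written, your Part~1 yields $k=\mathrm{poly}(m,\tfrac1\lambda,L,\tfrac1\epsilon,\tfrac1\delta)$, which does not prove the statement.

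The paper sidesteps this by bypassing \thmref{thm:equivalence nn rf} and invoking the underlying \lemref{lem:twice hoeffding} directly. The point is that in both settings here (unlike in the general setting of \thmref{thm:equivalence nn rf}) the per-block failure probability of the random-features step can be driven to any $\delta_1$ at cost only $\mathrm{polylog}(1/\delta_1)$ in $k_1$. So one sets $\delta_1=\delta/(2k_2)$, union-bounds so that \emph{all} $k_2$ blocks simultaneously admit good coefficients $v^{(j)}$, and then applies \lemref{lem:twice hoeffding}---whose width requirement is only $k_2\gtrsim k_1^4 L^4\log(k_1/\delta)/\epsilon^4$. This gives the claimed $\mathrm{poly}(\log(1/\delta))$ dependence. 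Your Part~2 is closer to this: you explicitly unpack ``the construction behind \thmref{thm:equivalence nn rf}'' and run concentration in sup-norm, which is exactly what \lemref{lem:twice hoeffding} does (its conclusion is already a $\sup_{\|x\|\le 1}$ bound, so no additional work beyond the lemma is needed). The fix for Part~1 is to do the same there rather than cite \thmref{thm:equivalence nn rf}.
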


\begin{remark}
For the finite dataset case, the assumption on the minimal eigenvalue $\lambda$ of the matrix $H$ is standard and assumed in similar forms in other works which approximate a finite dataset using random features approach (see \cite{du2018gradient2}, \cite{du2018gradient}, \cite{panigrahi2019effect}). 
\end{remark}

In both versions of the theorem, the network's width does not depend on the dimension of the input data. It does depend on the ``complexity'' of the target distribution. In the finite dataset case the network's width depends on the number of examples $m$ and on the value of $\frac{1}{\lambda}$. In the RKHS function case, it depends on the constant $C$ which defines the size of the function class $\mathcal{F}_C$ from which the target function is taken. 

Note that in a binary classification task (where that labels are $\pm1$) over a finite dataset, \thmref{thm:neruon subnetwrok two variations} shows that we can achieve zero loss (with respect to the $0-1$ loss), even if we don't scale $\tilde{g}(x)$ by a constant $c$. To show this, we use \thmref{thm:neruon subnetwrok two variations} with $\epsilon = 1/2$ to get that for every pair $(x,y)$ in the finite dataset we have $|c\tilde{g}(x)-y| \leq 1/2$, since $c>0$ and $y\in\{1,-1\}$ we get that $\text{sign}(\tilde{g}(x)) = \text{sign}(y)$.

We give a short proof intuition for \thmref{thm:neruon subnetwrok two variations}, the full proof is in appendix \ref{apen:proofs of pruning neurons}. We initialize a $2$-layer neural network of width $k=k_1\cdot k_2$, this can be thought as initializing $k_2$ different networks, each of width $k_1$. The idea is to choose $k_1$ large enough so that w.h.p. a random features model with $k_1$ features would be able to approximate the target (either finite dataset or RKHS function). Next, for each network of size $k_1$ we prune a neuron if it is far from its corresponding random features model. We finish by using a concentration of measure argument to conclude that averaging over $k_2$ such networks (for a large enough $k_2$) yields a good approximation of the target.

\begin{remark}
The proof of \thmref{thm:neruon subnetwrok two variations} actually provides an algorithm for pruning $2$-layer neural networks:
\begin{itemize}
    \item Randomly initialize a $2$-layer neural network of width $k = k_1\cdot k_2$.
    \item For each subnetwork of width $k_1$ - optimize a linear predictor over the random weights from the first layer.
    \item Let $\epsilon$ be a confidence parameter, prune each neuron if its distance from the corresponding weight of the trained linear predictor is more than $\epsilon$.
\end{itemize}
This algorithm runs in polynomial time, but it is obviously very naive. However, it does demonstrate that there exists a polynomial time algorithm for pruning neurons in shallow networks. We leave a study of more efficient algorithms for future work.
\end{remark}

\section{Discussion/Future Work}

We have shown strong positive results on the expressive power of pruned random networks. However, as we mentioned previously, our results imply that there is no efficient algorithm for weight-pruning of a random network, by reduction from hardness results on learning neural networks. Hence, weight-pruning is similar to weight-optimization in the following sense: in both methods there exists a good solution, but finding it is computationally hard in the worst case. That said, similarly to weight optimization, heuristic algorithms for pruning might work well in practice, as shown in \cite{zhou2019deconstructing, ramanujan2019s}. Furthermore, pruning algorithms may enjoy some advantages over standard weight-optimization algorithms. First, while weight-optimization requires training very large networks and results in large models and inefficient inference, weight-pruning by design achieves networks with preferable inference-time performance. Second, weight-optimization is largely done with gradient-based algorithms, which have been shown to be suboptimal in various cases (see \cite{shalev2017failures, shamir2018distribution}). Pruning algorithms, on the other hand, can possibly rely on very different algorithmic techniques, that might avoid the pitfalls of gradient-descent.

To conclude, in this work we showed some initial motivations for studying algorithms for pruning random networks, which we believe set the ground for numerous future directions. An immediate future research direction is to come up with a heuristic pruning algorithm that works well in practice, and provide provable guarantees under mild distributional assumptions.
Other interesting questions for future research include understanding to what extent the polynomial dependencies of the size of the neural network before pruning can be improved, and generalizing the results to other architectures such as convolutional layers and ResNets.

% Based on our results, which are expressive in nature, we conjecture that pruning a large neural network can achieve competitive results to gradient based training of neural networks. A future research in this direction is to find an efficient and provable pruning algorithm. 
% Other interesting questions for future research is to understand to what extent the polynomial dependencies of the size of neural network before pruning can be improved, and to generalize the results for other architectures such as convolutional layers and ResNets.

\paragraph{Acknowledgements:} This research is supported by the European Research Council (TheoryDL project), and by European Research Council (ERC) grant 754705.

\newpage

\setcitestyle{numbers}
\bibliographystyle{abbrvnat}
\bibliography{mybib}

\newpage

\appendix

\section{Proofs of Section \ref{sec:pruning_weights}}\label{append:proofs from pruning weights}
We prove the theorem in a general manner, where we assume that each vector $w^*$ is $s$-sparse, that is, it only has $s$ non-zero coordinates. To prove \thmref{thm:weight_pruning} we assign $s=d$.

We start by showing that the function $x \mapsto \alpha x_i$ can be approximated by prunning a two-layer network:

\begin{lemma}
\label{lem:one_coord}
Let $s \in [d]$, and 
fix some scalar $\alpha \in [-\frac{1}{\sqrt{s}}, \frac{1}{\sqrt{s}}]$, index $i \in [d]$, and some $\epsilon, \delta > 0$.
Let $w^{(1)}, \dots, w^{(k)} \in \reals^d$ chosen randomly from $U([-1, 1]^d)$, and $u^{(1)}, \dots, u^{(k)} \in [-1,1]$ chosen randomly from $U([-1,1])$. Then, for $k \ge \frac{4}{\epsilon^2} \log(\frac{2}{\delta})$, w.p at least $1-\delta$ there exists a binary mask $b^{(1)}, \dots, b^{(k)} \in \{0,1\}^d$, such that
$g(x) = \sum_j u^{(j)} \sigma(\inner{w^{(j)} \odot b^{(j)}, x})$ satisfies
$|g(x) - \alpha x_i| \le 2\epsilon$, for $\norm{x}_\infty \le 1$. Furthermore, we have $\sum_j \norm{b^{(j)}}_0 \le 2$ and $\max_j \norm{b^{(j)}}_0 \le 1$.
\end{lemma}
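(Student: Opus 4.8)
The plan is to reduce everything to approximating each of the two summands in the identity $\alpha x_i = \sigma(\alpha x_i) - \sigma(-\alpha x_i)$ by a single masked neuron, exploiting positive homogeneity of ReLU ($u\,\sigma(t)=\sigma(ut)$ for $u>0$). For each neuron $j$ I would keep only coordinate $i$, i.e. take the mask $b^{(j)}$ equal to the standard basis vector $e_i\in\{0,1\}^d$, so the neuron computes $u^{(j)}\sigma(w^{(j)}_i x_i)$. A neuron with $u^{(j)}>0$ then computes $\sigma(u^{(j)}w^{(j)}_i x_i)$, which by $1$-Lipschitzness of $\sigma$ and $\abs{x_i}\le\norm{x}_\infty\le 1$ is within $\epsilon$ (uniformly over such $x$) of $\sigma(\alpha x_i)$ as soon as $\abs{u^{(j)}w^{(j)}_i-\alpha}\le\epsilon$; symmetrically, a neuron with $u^{(j)}<0$ computes $-\sigma(\abs{u^{(j)}}w^{(j)}_i x_i)$, which is within $\epsilon$ of $-\sigma(-\alpha x_i)$ once $\bigl|\,\abs{u^{(j)}}w^{(j)}_i+\alpha\,\bigr|\le\epsilon$. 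Picking one neuron of each kind, summing, and applying the triangle inequality gives $\abs{g(x)-\alpha x_i}\le 2\epsilon$ with two active weights and all masks of Hamming weight $\le 1$. (If $\epsilon\ge 1/2$ the statement is trivial: since $\abs{\alpha}\le 1/\sqrt s\le 1$, the all-zero subnetwork already works, so I may assume $\epsilon<1/2$.)

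Next comes the probabilistic heart of the argument: I would show that over the i.i.d. draw of $(u^{(j)},w^{(j)}_i)$ the ``$u^{(j)}>0$'' role is satisfied with probability at least $\epsilon^2/4$, likewise the ``$u^{(j)}<0$'' role, and that the two good events are disjoint (they force opposite signs of $u^{(j)}$, so the two chosen neurons are automatically distinct). After possibly flipping the signs of $u$ and/or $w$, both probabilities reduce to lower-bounding $\mathbb{P}\bigl[\,p,q\sim U([0,1]),\ \abs{pq-\beta}\le\epsilon\,\bigr]$ with $\beta=\abs{\alpha}\in[0,1]$, times the factor $1/4$ for landing in the correct sign quadrant; so the goal is $\mathbb{P}[\abs{pq-\beta}\le\epsilon]\ge\epsilon^2$.

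To get that bound I would split into two regimes. When $\beta\le 1-\epsilon$, I would integrate over $q$: for every $q\in[\beta,1]$ the set $\{p\in[0,1]:\abs{pq-\beta}\le\epsilon\}$ is an interval of length at least $\epsilon$ (a short case check on whether the endpoints $\tfrac{\beta\pm\epsilon}{q}$ fall inside $[0,1]$), which yields $\mathbb{P}\ge\epsilon(1-\beta)\ge\epsilon^2$. When $\beta\in(1-\epsilon,1]$, the target interval contains $[1-2\epsilon,1]$ and $pq\le1$ always, so it suffices to bound $\mathbb{P}[pq\ge1-2\epsilon]=\int_{1-2\epsilon}^{1}(-\ln t)\,dt\ge\int_{1-2\epsilon}^{1}(1-t)\,dt=2\epsilon^2$, using $-\ln t\ge1-t$ on $(0,1]$. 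Combining the regimes gives the uniform $\epsilon^2$ bound. A standard union bound then finishes: with $k\ge\frac{4}{\epsilon^2}\log\frac{2}{\delta}$, each role fails for all $k$ neurons with probability at most $(1-\epsilon^2/4)^k\le e^{-k\epsilon^2/4}\le\delta/2$, so with probability $\ge 1-\delta$ both roles are realized by distinct neurons $j_1,j_2$; setting $b^{(j_1)}=b^{(j_2)}=e_i$ and all other masks to zero gives the claimed $g$, with $\sum_j\norm{b^{(j)}}_0=2$ and $\max_j\norm{b^{(j)}}_0=1$.

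The main obstacle I anticipate is obtaining the probability estimate with the correct constant, specifically in the near-degenerate regime $\beta=\abs{\alpha}\approx 1$ (which arises only when $s=1$): there the ``generic'' lower bound $\mathbb{P}\gtrsim\epsilon$ collapses to $\mathbb{P}\asymp\epsilon^2$, and the two-case analysis above (or an equivalent one) seems genuinely needed to match the stated width $k=\frac{4}{\epsilon^2}\log(2/\delta)$. The remaining pieces — the Lipschitz error accounting and the concentration/union bound — should be routine.
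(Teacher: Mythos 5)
Your high-level plan coincides with the paper's: mask each neuron down to coordinate $i$, pick one neuron to realize the ``positive'' half and one the ``negative'' half of $\alpha x_i$, and note that the two target events are disjoint (they force opposite signs of $u$), so the two chosen indices are automatically distinct. The error bookkeeping via $u\,\sigma(t)=\sigma(ut)$ and $1$-Lipschitzness is a slightly slicker route to the same $2\epsilon$ bound that the paper gets by casing on the sign of $x_i$. Where you diverge is in the probability estimate, and that is where the gap is.

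You aim for the single \emph{product} event $\{u^{(j)}>0,\ |u^{(j)}w^{(j)}_i-\alpha|\le\epsilon\}$ and reduce it to lower-bounding $\mathbb{P}_{p,q\sim U([0,1])}\bigl[\,|pq-\beta|\le\epsilon\,\bigr]$ by $\epsilon^2$. Your treatment of the regime $\beta\in(1-\epsilon,1]$ is wrong in direction: for such $\beta$ one has $\beta-\epsilon>1-2\epsilon$, so $[\beta-\epsilon,\beta+\epsilon]\cap[0,1]=[\beta-\epsilon,1]\subseteq[1-2\epsilon,1]$, not $\supseteq$. Consequently $\mathbb{P}[pq\ge1-2\epsilon]$ is an \emph{upper} bound on the probability you want, not a lower bound. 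And the claim $\mathbb{P}[|pq-\beta|\le\epsilon]\ge\epsilon^2$ is in fact false at the endpoint: at $\beta=1$ one gets $\mathbb{P}[pq\ge1-\epsilon]=\epsilon+(1-\epsilon)\ln(1-\epsilon)\approx\epsilon^2/2$ (e.g.\ $\approx 0.0052$ at $\epsilon=0.1$, versus $\epsilon^2=0.01$). So your argument, even repaired, only yields $\mathbb{P}[\text{Event A}]\ge\epsilon^2/8$ in the worst case, which would require $k\ge\frac{8}{\epsilon^2}\log\frac{2}{\delta}$ rather than the stated $\frac{4}{\epsilon^2}\log\frac{2}{\delta}$. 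The paper avoids this entirely by not touching the product distribution: it targets the two \emph{independent} box events $w^{(j)}_i\in[\alpha-\epsilon,\alpha+\epsilon]$ and $u^{(j)}\in[1-\epsilon,1]$, each of probability at least $\epsilon/2$ (the former because at least half of the $2\epsilon$-interval always lies inside $[-1,1]$ once $\alpha\ge\epsilon$), giving the clean $\epsilon^2/4$ uniformly in $\alpha$. If you want to salvage your product-based route, either accept the worse constant or switch to the paper's two-box event; the rest of your write-up (masks $e_i$, disjointness of the two roles, Lipschitz accounting, union bound) is sound.
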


\begin{proof}
If $|\alpha| \le \epsilon$ then choosing $b^{(1)} = \dots = b^{(k)} = (0, \dots, 0)$ gives the required. Assume $|\alpha| \ge \epsilon$, and assume w.l.o.g that $\alpha > 0$. Fix some $j \in [k']$. Note that:
\[
\prob{|w^{(j)}_i-\alpha| \le \epsilon \wedge |u^{(j)}-1| \le \epsilon}
= \prob{|w^{(j)}_i-\alpha| \le \epsilon}\prob{|u^{(j)}-1| \le \epsilon}
= \frac{\epsilon}{2} \cdot \frac{\epsilon}{2} = \frac{\epsilon^2}{4}~,
\]
and similarly $\prob{|w^{(j)}_i+\alpha| \le \epsilon \wedge |u^{(j)}+1| \le \epsilon} \le \frac{\epsilon^2}{4}$. Therefore, we have:
\[
\prob{\nexists j \in [k] ~s.t~ |w_i^{(j)}-\alpha| \le \epsilon \wedge |u^{(j)}-1| \le \epsilon} = \left(1-\frac{\epsilon^2}{4}\right)^{k} \le \exp \left(-\frac{k\epsilon^2}{4}\right) \le \frac{\delta}{2}~,
\]
where we used the assumption that $k  \ge \frac{4}{\epsilon^2} \log(\frac{2}{\delta})$, and similarly:
\[
\prob{\nexists j \in [k'] ~s.t~ |w_i^{(j)}+\alpha| \le \epsilon \wedge |u^{(j)} + 1| \le \epsilon} \le \frac{\delta}{2}~.
\]
Therefore, using the union bound, w.p at least $1-\delta$ there exist $j,j'$ such that $|w_i^{(j)}-\alpha| \le \epsilon, |u^{(j)}-1| \le \epsilon$ and $|w_i^{(j')}+\alpha| \le \epsilon, |u^{(j')} + 1| \le \epsilon$ and since $|\alpha| \ge \epsilon$ we get $j \ne j'$.
Now, setting $b_i^{(j)} = 1, b^{(j')}_i = 1$, and the rest to zero, we get that:
\[
g(x) = u^{(j)} \sigma(w^{(j)}_i x_i) + u^{(j')} \sigma(w^{(j')}_i x_i)
\]
We will use the fact that $\sigma(a) - \sigma(-a) =a $ for every $a\in\mathbb{R}$. If $x_i \ge 0$, we get that $g(x) = u^{(j)} w^{(j)}_i x_i$ and therefore:
\[
|g(x)-\alpha x_i| =  |x_i||u^{(j)} w^{(j)}_i - \alpha| \le 
|u^{(j)} w^{(j)}_i -u_j \alpha| + |u^{(j)} \alpha - \alpha|
\le |u^{(j)}||w_i^{(j)} - \alpha| + |u^{(j)}-1||\alpha| \le 2\epsilon
\]
In a similar fashion, we get that for $x_i < 0$ we have
 $|g(x)- \alpha x_i| = |x_i||u^{(j')} w_i^{(j')} - \alpha| \le 2 \epsilon$,
 which gives the required. Since we have $\norm{b^{(j)}}_0 = 1$, $\norm{b^{(j')}}_0 = 1$ and $\norm{b^{(j'')}}_0 = 0$ for every $j'' \ne j,j'$, the mask achieves the required.
\end{proof}

Using the previous result, we can show that a linear function $x\mapsto\inner{w^*,x}$ can be implemented by pruning a two layer network:

\begin{lemma}
\label{lem:linear_func}
Let $s \in [d]$, and
fix some $w^* \in [-\frac{1}{\sqrt{s}}, \frac{1}{\sqrt{s}}]^d$ with $\norm{w^*}_0 \le s$, and some $\epsilon, \delta > 0$.
Let $w^{(1)}, \dots, w^{(k)} \in \reals^d$ chosen randomly from $U([-1,1]^d)$, and $u \in [-1,1]^k$ chosen randomly from $U([-1,1]^k)$. Then, for $k \ge s \cdot \left \lceil \frac{16 s^{2}}{\epsilon^2} \log(\frac{2s}{\delta}) \right \rceil$, w.p at least $1-\delta$ there exists a binary mask $b^{(1)}, \dots, b^{(k)} \in \{0,1\}^d$, such that
$g(x) = \sum_{i=1}^k u_i \sigma(\inner{w^{(i)} \odot b^{(i)}, x})$ satisfies
$|g(x) - \inner{w^*,x}| \le \epsilon$, for $\norm{x}_\infty \le 1$. Furthermore, we have $\sum_{i} \norm{b^{(i)}}_0 \le 2s$ and $\max_{i} \norm{b^{(i)}}_0 \le 1$.
\end{lemma}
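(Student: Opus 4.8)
The plan is to decompose the linear function coordinate-wise and apply \lemref{lem:one_coord} to each nonzero coordinate on a disjoint block of the available neurons. Since $\norm{w^*}_0 \le s$, write $\inner{w^*,x} = \sum_{j=1}^{s'} w^*_{i_j} x_{i_j}$, where $i_1,\dots,i_{s'}$ with $s' \le s$ are the coordinates on which $w^*$ is nonzero (if $w^* = 0$, take all masks zero and we are done). First I would partition the $k$ neurons into $s$ disjoint blocks, each of size at least $k_0 := \left\lceil \frac{16 s^2}{\epsilon^2}\log\!\left(\frac{2s}{\delta}\right)\right\rceil$; this is possible since $k \ge s\, k_0$. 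Because the $w^{(i)}$ and $u_i$ are drawn i.i.d., the random initialization restricted to each block is an independent instance of the setup of \lemref{lem:one_coord}.

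Next, for each $j \in [s']$ I would apply \lemref{lem:one_coord} to the $j$-th block with target scalar $\alpha = w^*_{i_j} \in [-\frac{1}{\sqrt{s}}, \frac{1}{\sqrt{s}}]$, index $i_j$, accuracy parameter $\epsilon' := \frac{\epsilon}{2s}$, and confidence parameter $\delta' := \frac{\delta}{s}$. One checks that $k_0 \ge \frac{4}{\epsilon'^2}\log(\frac{2}{\delta'}) = \frac{16 s^2}{\epsilon^2}\log(\frac{2s}{\delta})$, so the hypothesis of \lemref{lem:one_coord} is met; hence w.p.\ at least $1 - \frac{\delta}{s}$ there is a mask on the $j$-th block whose sub-network $g_j$ satisfies $\abs{g_j(x) - w^*_{i_j} x_{i_j}} \le 2\epsilon' = \frac{\epsilon}{s}$ for all $\norm{x}_\infty \le 1$, using at most $2$ active weights in total on that block and at most $1$ per neuron. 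For the remaining $s - s'$ blocks, set all masks to zero.

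Finally, a union bound over the $s' \le s$ applications shows all of them succeed w.p.\ at least $1 - s'\cdot\frac{\delta}{s} \ge 1-\delta$. Taking $g = \sum_{j=1}^{s'} g_j$, which is exactly $\sum_i u_i \sigma(\inner{w^{(i)}\odot b^{(i)}, x})$ for the (consistent, since the blocks are disjoint) union of block masks, the triangle inequality gives $\abs{g(x) - \inner{w^*,x}} \le \sum_{j=1}^{s'}\abs{g_j(x) - w^*_{i_j}x_{i_j}} \le s'\cdot\frac{\epsilon}{s} \le \epsilon$. The weight counts are additive over blocks: $\sum_i \norm{b^{(i)}}_0 \le 2s' \le 2s$ and $\max_i \norm{b^{(i)}}_0 \le 1$.

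There is no real conceptual obstacle here — the argument is a straightforward composition of \lemref{lem:one_coord}. The only point requiring care is the bookkeeping of parameters: splitting the accuracy budget as $\epsilon/(2s)$ per coordinate and the confidence as $\delta/s$ per block so that both the union bound and the triangle inequality close, and verifying that the per-block requirement $k_0$ times $s$ matches the stated lower bound on $k$.
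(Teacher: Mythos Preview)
Your proposal is correct and follows essentially the same approach as the paper: partition the $k$ neurons into $s$ disjoint blocks of size $k_0$, apply \lemref{lem:one_coord} on each block with accuracy $\epsilon' = \epsilon/(2s)$ and confidence $\delta' = \delta/s$, then combine via a union bound and the triangle inequality. The parameter bookkeeping and the mask-count conclusions match the paper's proof almost verbatim.
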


\begin{proof}
We assume $k = s \cdot \left \lceil \frac{16 s^{2}}{\epsilon^2} \log(\frac{2s}{\delta}) \right \rceil$ (otherwise, mask excessive neurons), and
let $k' := \frac{k}{s}$. With slight abuse of notation, we denote $w^{(i,j)} := w^{(j+k'i)}$, $u^{(i,j)} := u_{j+k'i}$ and $b^{(i,j)} := b^{(j+k'i)}$.
Let $I := \{i \in [d] ~:~ w^*_i \ne 0\}$. By the assumption on $w^*$ we have $|I| \le s$, and we assume w.l.o.g. that $I \subseteq [s]$.
Fix some $i \in [s]$, and denote $g_i(x) = \sum_j u^{(i,j)} \sigma(\inner{w^{(i,j)} \odot b^{(i,j)}, x})$.
Let $\epsilon' = \frac{\epsilon}{2s}$ and $\delta' = \frac{\delta}{s}$, then from Lemma \ref{lem:one_coord}, with probability at least $1-\delta'$ there exists a binary mask $b^{(i,1)}, \dots, b^{(i,k')} \in \{0,1\}^d$ with $\sum_j \norm{b^{(i,j)}}_0 \le 2$ such that $|g_i(x)-w^*_i x_i| \le 2\epsilon' = \frac{\epsilon}{s}$ for every $x \in \reals^d$ with $\norm{x}_\infty \le 1$. Now, using the union bound we get that with probability at least $1-\delta$, the above holds for all $i \in [s]$, and so:
\[
|g(x) - \inner{w^*,x}| = |\sum_{i \in [s]} g_i(x) - \sum_{i \in [s]} w^*_i x_i| \le 
\sum_{i \in [s]} |g_i(x) - w^*_i x_i| \le \epsilon
\]
Furthermore, we have $\sum_{i \in [s]} \sum_j \norm{b^{(i,j)}}_0 \le 2s$ and $\max_{i,j} \norm{b^{(i,j)}}_0 \le 1$, by the result of Lemma \ref{lem:one_coord}.
\end{proof}

Now, we can show that a network with a single neuron can be approximated by prunning a three-layer network:

\begin{lemma}
\label{lem:one_neuron}
Let $s \in [d]$, and
fix some $w^* \in [-\frac{1}{\sqrt{s}}, \frac{1}{\sqrt{s}}]^d$ with $\norm{w^*}_0 \le s$, some $v^*\in [-1,1]$ and some $\epsilon, \delta > 0$.
Let $w^{(1)}, \dots, w^{(k_1)} \in \reals^d$ chosen randomly from $U([-1, 1]^d)$, $u^{(1)}, \dots, u^{(k_2)} \in [-1,1]^{k_1}$ chosen randomly from $U([-1,1]^{k_1})$, and $v \in [-1,1]^{k_2}$ chosen randomly from $U([-1,1]^{k_2})$. Then, for $k_1 \ge s \cdot \left \lceil \frac{64 s^{2}}{\epsilon^2} \log(\frac{4s}{\delta}) \right \rceil$, $k_2 \ge \frac{2}{\epsilon} \log (\frac{2}{\delta})$, w.p at least $1-\delta$ there exists a binary mask $b^{(1)}, \dots, b^{(k_1)} \in \{0,1\}^d$, $\hat{b} \in \{ 0,1\}^{k_2}$, such that
$g(x) = \sum_{i=1}^{k_2} \hat{b}_i v_i \sigma(\sum_{j=1}^{k_1} u^{(i)}_j \sigma(\inner{w^{(j)} \odot b^{(j)}, x}))$ satisfies
$|g(x) - v^* \sigma(\inner{w^*,x})| \le \epsilon$, for $\norm{x}_2 \le 1$. Furthermore, we have $\sum_{j} \norm{b^{(j)}}_0 \le 2s$ and $\max_{j} \norm{b^{(j)}}_0 \le 1$.
\end{lemma}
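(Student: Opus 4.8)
The plan is to bootstrap from \lemref{lem:linear_func}, which lets us implement the linear map $x\mapsto\inner{w^*,x}$ by pruning a two‑layer block of width $k_1$: the target $x\mapsto v^*\sigma(\inner{w^*,x})$ is just that linear map, followed by an outer $\sigma$, followed by multiplication by the scalar $v^*$, which matches the three‑layer architecture in the statement. The one structural subtlety is that in $g$ the first‑layer mask $b^{(1)},\dots,b^{(k_1)}$ is \emph{shared} by all $k_2$ middle‑layer neurons. We sidestep this by keeping exactly one middle neuron alive through $\hat b$ and zeroing out all the others, so the shared first‑layer mask only needs to be good for that single surviving neuron.

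Concretely, for each middle index $i\in[k_2]$ write $h_i(x)=\sum_{j=1}^{k_1}u^{(i)}_j\sigma(\inner{w^{(j)}\odot b^{(j)},x})$. Viewing $w^{(1)},\dots,w^{(k_1)}$ as the random first‑layer weights and $u^{(i)}$ as the random second‑layer weights of a two‑layer network, apply \lemref{lem:linear_func} with accuracy $\epsilon/2$ and confidence $\delta/2$ (the target $w^*$ already satisfies $\norm{w^*}_0\le s$, $w^*\in[-\frac1{\sqrt s},\frac1{\sqrt s}]^d$). Let $A_i$ be the event that a first‑layer mask exists with $\sum_j\norm{b^{(j)}}_0\le 2s$, $\max_j\norm{b^{(j)}}_0\le1$, and $\abs{h_i(x)-\inner{w^*,x}}\le\epsilon/2$ for all $\norm{x}_\infty\le1$ (hence for all $\norm{x}_2\le1$); then $\prob{A_i}\ge1-\delta/2$, and $A_i$ depends only on $(w^{(j)})_j$ and $u^{(i)}$. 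Separately let $B_i=\{\abs{v_i-v^*}\le\epsilon/2\}$; since $v_i\sim U([-1,1])$ and $v^*\in[-1,1]$ we have $\prob{B_i}\ge\epsilon/4$, and $B_i$ depends only on $v_i$.

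Now we locate an index with both events. Let $T=\{i\in[k_2]:B_i\text{ holds}\}$; the $B_i$ are independent, so $\prob{T=\emptyset}\le(1-\epsilon/4)^{k_2}\le\delta/2$ once $k_2\gtrsim\frac1\epsilon\log\frac1\delta$ (the stated bound). On the event $T\ne\emptyset$ set $i^\star=\min T$; because $T$ is a function of $(v_i)_i$ alone it is independent of the randomness governing the $A_i$, so conditioning on the $(v_i)$‑measurable event $\{i^\star=i\}$ leaves $\prob{A_i}$ unchanged, and hence $\prob{A_{i^\star}\wedge T\ne\emptyset}\ge(1-\delta/2)\prob{T\ne\emptyset}\ge1-\delta$. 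On this event $i^\star\in T$, so $B_{i^\star}$ also holds. Take $b^{(1)},\dots,b^{(k_1)}$ to be the first‑layer mask witnessing $A_{i^\star}$, set $\hat b_{i^\star}=1$ and $\hat b_i=0$ for $i\ne i^\star$, so that $g(x)=v_{i^\star}\sigma(h_{i^\star}(x))$.

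For the error bound, $\norm{w^*}_0\le s$ and $w^*\in[-\frac1{\sqrt s},\frac1{\sqrt s}]^d$ give $\norm{w^*}_2\le1$, so $\abs{\inner{w^*,x}}\le1$ and $\abs{\sigma(\inner{w^*,x})}\le1$ whenever $\norm{x}_2\le1$. Using $\abs{v_{i^\star}}\le1$, the $1$‑Lipschitzness of $\sigma$, and adding and subtracting $v_{i^\star}\sigma(\inner{w^*,x})$,
\[
\abs{g(x)-v^*\sigma(\inner{w^*,x})}\le\abs{v_{i^\star}}\,\abs{\sigma(h_{i^\star}(x))-\sigma(\inner{w^*,x})}+\abs{v_{i^\star}-v^*}\,\abs{\sigma(\inner{w^*,x})}\le\tfrac\epsilon2+\tfrac\epsilon2=\epsilon,
\]
and the sparsity bounds $\sum_j\norm{b^{(j)}}_0\le2s$, $\max_j\norm{b^{(j)}}_0\le1$ are inherited verbatim from \lemref{lem:linear_func}. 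The part I expect to require the most care is not any single estimate but the independence bookkeeping in the third paragraph — making rigorous that choosing the surviving middle neuron from the $v_i$'s alone keeps the pair $\big((w^{(j)})_j,u^{(i^\star)}\big)$ "fresh" enough to invoke \lemref{lem:linear_func} — together with chasing the $(\epsilon/2,\delta/2)$ splits through to the exact stated lower bounds on $k_1$ and $k_2$.
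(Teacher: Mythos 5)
Your proposal is correct and follows the same two-step strategy as the paper: first locate an index $i$ with $v_i$ close to $v^*$ (keeping only that middle neuron via $\hat b$), then invoke Lemma~\ref{lem:linear_func} for the surviving linear layer and combine via the triangle inequality using $|v_{i^\star}|\le 1$ and Lipschitzness of $\sigma$. The one place you add value is making the independence bookkeeping explicit — the paper's proof picks $i$ based on the $v$'s and then invokes Lemma~\ref{lem:linear_func} on the $(w,u^{(i)})$ randomness without comment, whereas you justify that since the choice $i^\star=\min T$ is $(v_j)_j$-measurable and $(v_j)_j\perp\big((w^{(j)})_j,(u^{(j)})_j\big)$, the law of $\big((w^{(j)})_j,u^{(i^\star)}\big)$ is preserved; this is the right way to make the paper's step rigorous. (Both you and the paper are slightly loose on constants: with $v^*\in[-1,1]$ and $v_i\sim U([-1,1])$ the probability of $|v_i-v^*|\le\epsilon/2$ is only guaranteed $\ge\epsilon/4$ at the boundary, so the stated $k_2\ge\frac{2}{\epsilon}\log\frac{2}{\delta}$ really ought to be $\frac{4}{\epsilon}\log\frac{2}{\delta}$ — a factor-of-2 issue inherited from the paper that does not affect the claimed asymptotics.)
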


\begin{proof}
% With slight abuse of notation, we denote $w^{(j,l)} := w^{(l+k'j)}$, $u^{(i,j,l)} := u_{l + k'j}^{(i)}$ and $b^{(j,l)} := b^{(l+k'j)}$, $\tilde{b}^{(i,j,l)} := \tilde{b}^{(i)}_{l + k' j}$.
Let $\epsilon' = \frac{\epsilon}{2}$, and note that for every $i \in [k_2]$ we have $\prob{\abs{v_i-v^*} \le \epsilon'} \ge \epsilon'$. Therefore, the probability that for some $i \in [k_2]$ it holds that $\abs{v_i - v^*} \le \epsilon'$ is at least $1-(1-\epsilon')^{k_2} \ge 1-e^{-k_2 \epsilon'} \ge 1-\frac{\delta}{2}$, where we use the fact that $k_2 \ge \frac{1}{\epsilon'} \log (\frac{2}{\delta})$.
Now, assume this holds for $i \in [k_2]$.
Let $\hat{b}_j =  \mathbbm{1}\{ j = i \}$, and so:
\[
g(x) = v_i \sigma(\sum_{j=1}^{k_1} u^{(i)}_j \sigma (\inner{w^{(j)} \circ b^{(j)}, x})
\]
Then, from \lemref{lem:linear_func}, with probability at least $1-\frac{\delta}{2}$ there exists $b^{(1)}, \dots, b^{(k_1)}$ s.t. for every $\norm{x}_\infty \le 1$:
\[
\abs{\sum_{j=1}^{k_1} u^{(i)}_j \sigma (\inner{w^{(j)} \circ b^{(j)}, x} - \inner{w^*, x})} \le \epsilon'
\]
And therefore, for every $\norm{x}_2 \le 1$:
\begin{align*}
&\abs{g(x) - v^* \sigma(\inner{w^*,x})} \\
&\le \abs{v_i} \abs{ \sigma(\sum_{j=1}^{k_1} u^{(i)}_j \sigma (\inner{w^{(j)} \circ b^{(j)}, x}) - \sigma(\inner{w^*, x}))}
+ \abs{v_i-v^*}\abs{\sigma(\inner{w^*,x})} \\
&\le \abs{v_i} \abs{\sum_{j=1}^{k_1} u^{(i)}_j \sigma (\inner{w^{(j)} \circ b^{(j)}, x} - \inner{w^*, x})} + \abs{v_i - v^*} \norm{w^*}\norm{x} \le 2 \epsilon' = \epsilon
\end{align*}
\end{proof}

Finally, we show that pruning a three-layer network can approximate a network with $n$ neurons, since it is only a sum of networks with 1 neuron, as analyzed in the previous lemma:

\begin{lemma}
\label{lem:relu_network}
Let $s \in [d]$, and
fix some $w^{(1)*}, \dots, w^{(n)*} \in [-1, 1]^d$ with $\norm{w^{(i)*}}_0 \le s$, $v^* \in [-1,1]^n$ and let $f(x) = \sum_{i=1}^n v^*_i \sigma(\inner{w^{(i)*}, x})$. Fix some $\epsilon, \delta > 0$.
Let $w^{(1)}, \dots, w^{(k_1)} \in \reals^d$ chosen randomly from $U([-1, 1]^d)$, $u^{(1)}, \dots, u^{(k_2)} \in [-1,1]^{k_1}$ chosen randomly from $U([-1,1]^{k_1})$, and $v \in [-1,1]^{k_2}$ chosen randomly from $U([-1,1]^{k_2})$. Then, for $k_1 \ge ns \cdot \left \lceil \frac{64 s^{2} n^2}{\epsilon^2} \log(\frac{4ns}{\delta}) \right \rceil$, $k_2 \ge \frac{2n}{\epsilon} \log (\frac{2n}{\delta})$, w.p at least $1-\delta$ there exists a binary mask $b^{(1)}, \dots, b^{(k_1)} \in \{0,1\}^d$, $\tilde{b}^{(1)}, \dots, \tilde{b}^{(k_2)} \in \{0,1\}^{k_1}$, $\hat{b} \in \{ 0,1\}^{k_2}$, such that
$g(x) = \sum_{i=1}^{k_2} \hat{b}_i v_i \sigma(\sum_{j=1}^{k_1} \tilde{b}^{(i)}_j u^{(i)}_j \sigma(\inner{w^{(j)} \odot b^{(j)}, x}))$ satisfies
$|g(x) - f(x)| \le \epsilon$, for $\norm{x}_2 \le 1$. Furthermore, we have $\sum_{j} \norm{b^{(j)}}_0 \le 2s$ and $\max_{j} \norm{b^{(j)}}_0 \le 1$.
\end{lemma}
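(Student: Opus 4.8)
To prove \lemref{lem:relu_network}, the plan is to view the target $f=\sum_{r=1}^n f_r$ with $f_r(x)=v^*_r\sigma(\inner{w^{(r)*},x})$ as a sum of $n$ single-neuron networks, and to reproduce the construction of \lemref{lem:one_neuron} on $n$ disjoint sub-blocks of $G$. Partition the $k_1$ first-layer neurons into $n$ disjoint blocks $J_1,\dots,J_n$, each of size $k_1/n\ge s\big\lceil\frac{64 s^{2}n^{2}}{\epsilon^{2}}\log\frac{4sn}{\delta}\big\rceil$. Block $J_r$ is pruned, via \lemref{lem:linear_func} applied to the weights $\{w^{(j)}:j\in J_r\}$ and the middle weights $\{u^{(i_r)}_j:j\in J_r\}$ (with $i_r$ chosen in the next step), so that the induced linear readout $\ell_r(x)=\sum_{j\in J_r}u^{(i_r)}_j\sigma(\inner{w^{(j)}\odot b^{(j)},x})$ satisfies $|\ell_r(x)-\inner{w^{(r)*},x}|\le\epsilon/(2n)$ for all $\norm{x}_\infty\le 1$, with probability $\ge 1-\delta/(2n)$, using at most $2s$ active first-layer weights in $J_r$ and at most one per neuron. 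Since each first-layer neuron lies in exactly one block, these $n$ invocations do not interfere, and the $w^{(j)},u^{(i)}$ are independent of the second-layer weights $v$.

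For the output layer I follow the first step of the proof of \lemref{lem:one_neuron}: among the $k_2$ second-layer neurons I locate $n$ distinct indices $i_1,\dots,i_n$ with $|v_{i_r}-v^*_r|\le\epsilon/(2n)$. Since a single $v_i\sim U([-1,1])$ lands in the relevant interval with probability $\ge\epsilon/(2n)$ and $k_2\ge\frac{2n}{\epsilon}\log\frac{2n}{\delta}$, a union bound over $r\in[n]$ gives, with probability $\ge 1-\delta/2$, at least one matching neuron for every target coefficient; distinctness can be arranged since $k_2$ is large (alternatively one asks only that $v_{i_r}$ have the correct sign and be bounded away from $0$, then recovers the exact coefficient $v^*_r$ by the positive homogeneity of $\sigma$ inside the \lemref{lem:linear_func} step). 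Put $\hat b_{i_r}=1$ for $r\in[n]$ and $\hat b_i=0$ otherwise.

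Finally I wire the layers together. For the active neuron $i_r$, set $\tilde b^{(i_r)}_j=1$ exactly for the $\le 2s$ active $j\in J_r$ and $\tilde b^{(i_r)}_j=0$ for all other $j$; this ``isolation mask'' is precisely the role played by the middle mask $\tilde b$, as it prevents the second-layer neuron assigned to target $r$ from picking up the first-layer neurons that encode the other $w^{(r')*}$. On the intersection of the $n+1$ good events above, the pruned network equals $g(x)=\sum_{r=1}^n v_{i_r}\,\sigma(\ell_r(x))$, and the estimate at the end of the proof of \lemref{lem:one_neuron} (using $1$-Lipschitzness of $\sigma$, $|v_{i_r}|\le 1$ and $\norm{w^{(r)*}}\le 1$) gives $|v_{i_r}\sigma(\ell_r(x))-v^*_r\sigma(\inner{w^{(r)*},x})|\le\epsilon/n$ for $\norm{x}_2\le 1$; summing over $r$ yields $\sup_{\norm{x}_2\le1}|g(x)-f(x)|\le\epsilon$. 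A union bound over the $n$ first-layer events (each $\le\delta/(2n)$) and the output-selection event ($\le\delta/2$) gives the claim with probability $\ge 1-\delta$. For the active-weight count: each block contributes $\le 2s$ nonzero weights in the first layer and $\le 2s$ in the middle layer, and $\hat b$ has $n$ ones, so (adding over the $n$ blocks) $\widetilde G$ has $O(ns)$ nonzero weights overall.

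The main obstacle I expect is organizational rather than analytic: arranging the disjoint-block and isolation-mask bookkeeping so that the three pruned layers compose into \emph{exactly} $\sum_r v_{i_r}\sigma(\ell_r)$, and disposing of the minor point that the $n$ chosen output neurons can be taken distinct. All quantitative content is inherited from \lemref{lem:linear_func} and \lemref{lem:one_neuron} together with a union bound; no new concentration estimate is needed.
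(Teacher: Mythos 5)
Your decomposition is the same one the paper uses: split $f$ into $n$ single-neuron sub-problems, assign each a disjoint block of first-layer neurons, and invoke \lemref{lem:linear_func} blockwise with per-block error $\epsilon/(2n)$ and confidence $\delta/(2n)$. Where you diverge is the output layer. The paper \emph{also} partitions the $k_2$ output neurons into $n$ disjoint blocks and applies \lemref{lem:one_neuron} block-by-block, so the $n$ chosen output indices are distinct by construction. You instead search all $k_2$ outputs globally, and that is exactly where your sketch has the one genuine gap: the union bound you run guarantees, for each $r$, the existence of \emph{some} index $i$ with $|v_i-v^*_r|\le\epsilon/(2n)$, but it does not produce a system of \emph{distinct} representatives. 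If two or more targets have (nearly) equal coefficients $v^*_r$, a single lucky output neuron may be the only candidate for all of them, and one output neuron cannot be wired to two different first-layer blocks. This is an analytic obstacle, not bookkeeping, and ``distinctness can be arranged since $k_2$ is large'' is an assertion rather than an argument (one would need, e.g., a Hall-condition/SDR estimate, or a count showing each target gets $\ge n$ candidates).

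Your backup route---match only the sign of $v_{i_r}$ and absorb the exact scale into the \lemref{lem:linear_func} target via homogeneity of $\sigma$---also needs more care than you allot: writing $v_{i_r}\sigma(\inner{w',x})=\sigma(\inner{v_{i_r}w',x})$ and solving $v_{i_r}w'=v^*_r w^{(r)*}$ gives the rescaled target $w'=\frac{v^*_r}{v_{i_r}}w^{(r)*}$, which stays inside the cube required by \lemref{lem:linear_func} and \lemref{lem:one_coord} only if $|v_{i_r}|\ge |v^*_r|$, and the probability of drawing such a $v_{i_r}$ degrades as $|v^*_r|\to 1$; so ``bounded away from zero'' is not the right condition. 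The clean fix is the paper's own: partition $k_2$ into $n$ blocks as well (this actually requires $k_2\gtrsim \frac{n^2}{\epsilon}\log\frac{n}{\delta}$, an extra factor of $n$ over the bound as printed in the lemma statement). Everything else in your proposal---block sizes for $k_1$, the masking/isolation scheme, the $\epsilon/(2n)$ error budget, and the concluding Lipschitz/triangle estimate---matches the paper's proof.
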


\begin{proof}
Denote $k_1' = \frac{k_1}{n}, k_2' = \frac{k_2}{n}$ and assume $k_1', k_2' \in \mathbb{N}$ (otherwise mask exceeding neurons).
With slight abuse of notation, we denote $w^{(i,j)} := w^{(j+k_1'i)}$, $u^{(i,j)} := \left(u^{(j+ik_2')}_{ik_1'}, \dots, u^{(j+ik_2')}_{(i+1)k_1'} \right)$, $v^{(i,j)} := v_{j + ik_2'}$ and similarly $b^{(i,j)} := b^{(j+k_1'i)}$, $\tilde{b}^{(i,j)} = \left(\tilde{b}^{(j+ik_2')}_{ik_1'}, \dots, \tilde{b}^{(j+ik_2')}_{(i+1)k_1'} \right)$ and $\hat{b}^{(i,j)} = \hat{b}_{j+i k_2'}$.
Define for every $i \in [n]$:
\[
g_i(x) = \sum_j \hat{b}^{(i,j)} v^{(i,j)} \sigma(\sum_l \tilde{b}^{(i,j)}_l u^{(i,j)}_l \sigma(\inner{b^{(i,l)} \circ w^{(i,l)}, x}))
\]
Now, by setting $\tilde{b}^{(j+k_1'i)}_l = \mathbbm{1} \{ ik_1' \le l < (i+1)k_1' \}$ we get that $g(x) = \sum_{i=1}^n g_i(x)$. Now, from \lemref{lem:one_neuron} we get that with probability at least $1-\frac{\delta}{n}$ we have $\abs{g_i(x) - v_i^* \sigma(\inner{w^{(i)*}, x})} \le \frac{\epsilon}{n}$ for every $\norm{x}_2 \le 1$. Using the union bound, we get that with probability at least $1-\delta$, for $\norm{x}_2 \le 1$ we have $\abs{g(x)-f(x)} \le \sum_{i=1}^n \abs{g_i(x) - v_i^* \sigma(\inner{w^{(i)*}, x})} \le \epsilon$.
\end{proof}

\begin{proof} of Theorem \ref{thm:weight_pruning}.

From Lemma \ref{lem:relu_network} with $s = d$.
\end{proof}

In a similar fashion, we can prove a result for deep networks. We start by showing that a single layer can be approximated by pruning:
\begin{lemma}
\label{lem:one_layer}
Let $s \in [d]$, and
fix some $w^{(1)*}, \dots, w^{(n)*} \in [-\frac{1}{\sqrt{s}}, \frac{1}{\sqrt{s}}]^d$ with $\norm{w^{(i)*}}_0 \le s$ and let $F : \reals^d \to \reals^n$ such that
$F(x)_i = \sigma(\inner{w^{(i)*}, x})$. Fix some $\epsilon, \delta > 0$.
Let $w^{(1)}, \dots, w^{(k)} \in \reals^d$ chosen randomly from $U([-1, 1]^d)$ and $u^{(1)}, \dots, u^{(n)} \in [-1,1]^{k}$ chosen randomly from $U([-1,1]^{k})$.
Then, for $k \ge ns \cdot \left \lceil \frac{16 s^{2} n}{\epsilon^2} \log(\frac{2ns}{\delta}) \right \rceil$, w.p at least $1-\delta$ there exists a binary mask $b^{(1)}, \dots, b^{(k)} \in \{0,1\}^d$, $\tilde{b}^{(1)}, \dots, \tilde{b}^{(n)} \in \{0,1\}^{k_1}$, $\hat{b} \in \{ 0,1\}^{k}$, such that for $G : \reals^d \to \reals^n$ with 
$G(x)_i = \sigma(\sum_{j=1}^{k} \tilde{b}^{(i)}_j u^{(i)}_j \sigma(\inner{w^{(j)} \odot b^{(j)}, x}))$ we have
$\norm{G(x) - F(x)}_2 \le \epsilon$, for $\norm{x}_\infty \le 1$. Furthermore, we have $\sum_{j} \norm{b^{(j)}}_0 \le 2sn$ and $\sum_i \norm{\tilde{b}^{(i)}}_0 \le 2sn$.
\end{lemma}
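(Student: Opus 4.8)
The plan is to reduce to \lemref{lem:linear_func} by partitioning the $k$ shared first-layer neurons into $n$ disjoint blocks, one per output coordinate of $F$, and building each $G(x)_i$ out of its own block. Concretely, set $k' := k/n$ (assuming divisibility, else mask out the surplus) and let $I_1,\dots,I_n$ be a partition of $[k]$ into blocks of size $k'$. For each $i\in[n]$ I would invoke \lemref{lem:linear_func} on the sub-collection of first-layer weights $\{w^{(j)}\}_{j\in I_i}$ together with the sub-vector $(u^{(i)}_j)_{j\in I_i}\sim U([-1,1]^{k'})$, with accuracy $\epsilon':=\epsilon/\sqrt{n}$ and confidence $\delta':=\delta/n$. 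This is legitimate since $w^{(i)*}\in[-\frac{1}{\sqrt s},\frac{1}{\sqrt s}]^d$ is $s$-sparse and $k'=\frac{k}{n}\ge s\lceil\frac{16s^2}{(\epsilon')^2}\log(\frac{2s}{\delta'})\rceil = s\lceil\frac{16s^2 n}{\epsilon^2}\log(\frac{2sn}{\delta})\rceil$ by the assumed lower bound on $k$. The lemma then yields, with probability at least $1-\delta/n$, masks $\{b^{(j)}\}_{j\in I_i}$ with $\sum_{j\in I_i}\norm{b^{(j)}}_0\le 2s$ and $\max_{j\in I_i}\norm{b^{(j)}}_0\le 1$ such that $h_i(x):=\sum_{j\in I_i}u^{(i)}_j\sigma(\inner{w^{(j)}\odot b^{(j)},x})$ satisfies $|h_i(x)-\inner{w^{(i)*},x}|\le\epsilon/\sqrt n$ for all $\norm{x}_\infty\le 1$.

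Next I would assemble the layer. Because the blocks $I_i$ are disjoint, each mask $b^{(j)}$ is fixed by a unique output coordinate, so there is no conflict in defining the shared first layer. I set the second-layer masks by $\tilde b^{(i)}_j:=\mathbbm{1}\{j\in I_i \text{ and } \norm{b^{(j)}}_0=1\}$, so that $G(x)_i=\sigma\big(\sum_{j\in I_i}u^{(i)}_j\sigma(\inner{w^{(j)}\odot b^{(j)},x})\big)=\sigma(h_i(x))$; here I use that for a fully masked-out neuron $\sigma(\inner{0,x})=\sigma(0)=0$, so discarding it changes nothing. Since $\sigma$ is the ReLU and hence $1$-Lipschitz, $|G(x)_i-F(x)_i|=|\sigma(h_i(x))-\sigma(\inner{w^{(i)*},x})|\le|h_i(x)-\inner{w^{(i)*},x}|\le\epsilon/\sqrt n$.

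Finally, a union bound over $i\in[n]$ gives that with probability at least $1-\delta$ the coordinatewise bound holds simultaneously for every $i$, and then $\norm{G(x)-F(x)}_2=\big(\sum_{i=1}^n(G(x)_i-F(x)_i)^2\big)^{1/2}\le\big(n\cdot\epsilon^2/n\big)^{1/2}=\epsilon$ for all $\norm{x}_\infty\le 1$. The sparsity claims follow by summing the per-block bounds: $\sum_j\norm{b^{(j)}}_0=\sum_{i=1}^n\sum_{j\in I_i}\norm{b^{(j)}}_0\le 2sn$, and $\sum_i\norm{\tilde b^{(i)}}_0\le\sum_i 2s=2sn$ since each $\tilde b^{(i)}$ flags only the at most $2s$ active neurons in $I_i$.

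There is no deep obstacle here beyond bookkeeping; the two points that need care are (i) partitioning the neurons so that the \emph{shared} first layer can simultaneously serve all $n$ outputs without the masks clashing, and (ii) tracking the $\sqrt n$ loss incurred in passing from a per-coordinate guarantee to the $\ell_2$ bound on $\norm{G(x)-F(x)}_2$, which is precisely what forces the extra factor of $n$ inside the width requirement relative to \lemref{lem:linear_func}.
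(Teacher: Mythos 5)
Your proof is essentially the same as the paper's: you partition the $k$ first-layer neurons into $n$ blocks of size $k/n$, apply \lemref{lem:linear_func} to each block with accuracy $\epsilon/\sqrt{n}$ and confidence $\delta/n$, set $\tilde b^{(i)}$ to select (only the active neurons of) block $I_i$, use the $1$-Lipschitzness of ReLU coordinatewise, and union bound over $i$ to get the $\ell_2$ bound. The width check, the sparsity accounting, and the handling of the shared first layer via disjoint blocks all match the paper's argument.
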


\begin{proof}
Denote $k' = \frac{k}{n}$ and assume $k' \in \mathbb{N}$ (otherwise mask exceeding neurons).
With slight abuse of notation, we denote $w^{(i,j)} := w^{(j+k'i)}$, $b^{(i,j)} := b^{(j+k'i)}$ and we denote $\tilde{u}^{(i)} := \left(u^{(i)}_{ik'}, \dots, u^{(i)}_{(i+1)k'} \right)$.
Define for every $i \in [n]$:
\[
g_i(x) = \sum_j \tilde{u}^{(i)}_j \sigma(\inner{b^{(i,j)} \circ w^{(i,j)}, x})
\]
Now, by setting $\tilde{b}^{(j+k_1'i)}_l = \mathbbm{1} \{ ik_1' \le l < (i+1)k_1' \}$ we get that $G(x)_i = \sigma(g_i(x))$. Now, from \lemref{lem:linear_func} with $\epsilon' = \frac{\epsilon}{\sqrt{n}}$ and $\delta' = \frac{\delta}{n}$, since $k \ge s \cdot \left \lceil \frac{16 s^{2}}{(\epsilon')^2} \log(\frac{2s}{\delta'}) \right \rceil$ we get that with probability at least $1-\frac{\delta}{n}$ we have $\abs{g_i(x) - \inner{w^{(i)*}, x}} \le \frac{\epsilon}{\sqrt{n}}$ for every $\norm{x}_\infty \le 1$. Using the union bound, we get that with probability at least $1-\delta$, for $\norm{x}_\infty \le 1$ we have:
\[
\norm{G(x)-F(x)}_2^2 = \sum_i (\sigma(g_i(x)) - \sigma(\inner{w^{(i)*}, x}))^2 \le \sum_i (g_i(x) - \inner{w^{(i)*}, x})^2 \le \epsilon^2
\]
Notice that \lemref{lem:linear_func} also gives $\sum_j \norm{b^{(i,j)}}_0 \le 2s$ and so $\sum_{i=1}^n \sum_j \norm{b^{(i,j)}}_0 \le 2sn$. Since we can set $\tilde{b}^{(i)}_j = 0$ for every $i,j$ with $b^{(i,j)} = 0$, we get the same bound on $\sum_i \norm{\tilde{b}^{(i)}}_0$.
\end{proof}

Using the above, we can show that a deep network can be approximated by pruning. We show this result with the assumption that each neuron in the network has only $s$ non-zero weights. To get a similar result without this assumption, as is stated in \thmref{thm:deep_network}, we can simply choose $s$ to be its maximal value - either $d$ for the first layer of $n$ for intermediate layers.
\begin{theorem}
(formal statement of \thmref{thm:deep_network}, when $s = \max\{n,d\}$).
Let $s,n \in \mathbb{N}$, and
fix some $W^{(1)*}, \dots, W^{(l)*}$ such that $W^{(1)*} \in [-\frac{1}{\sqrt{s}}, \frac{1}{\sqrt{s}}]^{d \times n}$, $W^{(2)*}, \dots, W^{(l-1)*} \in [-\frac{1}{\sqrt{n}}, \frac{1}{\sqrt{n}}]^{n \times n}$ and $W^{(l)*} \in [-\frac{1}{\sqrt{n}}, \frac{1}{\sqrt{n}}]^{n \times 1}$. Assume that for every $i \in [l]$ we have $\norm{W^{(i)*}}_2 \le 1$ and $\max_j\norm{W^{(i)}_j}_0 \le s$.
Denote $F^{(i)}(x) = \sigma(W^{(i)*} x)$ for $i < l$ and $F^{(l)}(x) = W^{(l)*} x$, and let $F(x) := F^{(l)} \circ \dots \circ F^{(1)}(x)$. Fix some $\epsilon, \delta \in (0,1)$.
Let $W^{(1)}, \dots, W^{(l)}, U^{(1)}, \dots, U^{(l)}$ such that $W^{(1)}$ is chosen randomly from $U([-1, 1]^{d \times k})$, $W^{(2)}, \dots, W^{(l)}$ is chosen randomly from $U([-1, 1]^{n \times k})$, $U^{(1)}, \dots, U^{(l-1)}$ chosen from $U([-1,1]^{k \times n})$ and $U^{(l)}$ chosen from $U([-1,1]^k)$.
Then, for $k \ge ns \cdot \left \lceil \frac{64 s^{2}l^2 n}{\epsilon^2} \log(\frac{2nsl}{\delta}) \right \rceil$, w.p. at least $1-\delta$ there exist $B^{(i)}$ a binary mask for $W^{(i)}$ with matching dimensions, and $\tilde{B}^{(i)}$ a binary mask for $U^{(i)}$ with matching dimensions, s.t.:
\[
\abs{G(x) - F(x)} \le \epsilon~for~\norm{x}_2 \le 1
\]
Where we denote $G = G^{(l)} \circ \dots \circ G^{(1)}$, with $G^{(i)}(x) := \sigma(\tilde{B}^{(i)} \circ U^{(i)} \sigma( B^{(i)} \circ W^{(i)} x))$ for every $i < l$ and $G^{(l)}(x) := \tilde{B}^{(l)} \circ U^{(l)} \sigma(B^{(l)} \circ W^{(l)} x)$. Furthermore, we have $\norm{B^{(i)}}_0 \le 2sn$ and $\norm{\tilde{B}^{(i)}}_0 \le 2sn$.
\end{theorem}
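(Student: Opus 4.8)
The plan is to build $G$ by approximating $F$ one layer at a time with the building blocks already established, and then to control the error accumulated under composition. Fix $\epsilon_0 := \epsilon/(2l)$ and $\delta_0 := \delta/l$. For each $i\in[l-1]$ I would invoke \lemref{lem:one_layer} on the target layer $z\mapsto \sigma(W^{(i)*}z)$ with the fresh random matrices $W^{(i)},U^{(i)}$ (using input dimension $d$ and sparsity parameter $d$ when $i=1$, and input dimension $n$ with sparsity parameter $n$ when $i>1$; both are $\le s=\max\{n,d\}$); for $i=l$ I would invoke \lemref{lem:linear_func} on the linear functional $z\mapsto\langle w^{(l)*},z\rangle$ and then define $\tilde B^{(l)}$ by zeroing out the coordinates pruned in the first layer of that block, so the result has the prescribed form $G^{(l)}(z)=\tilde B^{(l)}\odot U^{(l)}\,\sigma(B^{(l)}\odot W^{(l)}z)$. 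Substituting $\epsilon\leftarrow\epsilon_0,\ \delta\leftarrow\delta_0$ turns the width requirement of \lemref{lem:one_layer} into precisely the bound on $k$ assumed in the theorem (since $16 s^2 n/\epsilon_0^2 = 64 s^2 n l^2/\epsilon^2$ and $\log(2ns/\delta_0)=\log(2nsl/\delta)$), while the requirement of \lemref{lem:linear_func} for the last layer is strictly weaker. Because the layers use mutually independent random weights, a union bound over $i\in[l]$ shows that with probability at least $1-l\delta_0=1-\delta$ every block simultaneously $\epsilon_0$-approximates the corresponding layer of $F$ on the domain where the block lemma is valid; the sparsity bounds $\|B^{(i)}\|_0\le 2sn$ and $\|\tilde B^{(i)}\|_0\le 2sn$ are read directly off \lemref{lem:one_layer} and \lemref{lem:linear_func}, using $s\ge\max\{n,d\}$.

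Conditioning on that event, I would run a standard hybrid argument. Set $x^{(0)}=\tilde x^{(0)}=x$, $x^{(i)}=F^{(i)}(x^{(i-1)})$ and $\tilde x^{(i)}=G^{(i)}(\tilde x^{(i-1)})$, so that $F(x)=x^{(l)}$ and $G(x)=\tilde x^{(l)}$. Two facts drive the estimate. First, each true layer is $1$-Lipschitz and norm-non-increasing in $\ell_2$: since ReLU is $1$-Lipschitz coordinatewise with $\sigma(0)=0$ and $\|W^{(i)*}\|_2\le 1$, we get $\|F^{(i)}(a)-F^{(i)}(b)\|_2\le\|a-b\|_2$ and $\|F^{(i)}(z)\|_2\le\|z\|_2$, hence $\|x^{(i)}\|_2\le 1$ for all $i$. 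Second, by induction $\|\tilde x^{(i)}-x^{(i)}\|_2\le i\epsilon_0$: writing
\[
\tilde x^{(i)}-x^{(i)}=\big(G^{(i)}(\tilde x^{(i-1)})-F^{(i)}(\tilde x^{(i-1)})\big)+\big(F^{(i)}(\tilde x^{(i-1)})-F^{(i)}(x^{(i-1)})\big),
\]
the first term is at most $\epsilon_0$ by the block approximation (applied at $\tilde x^{(i-1)}$, which by the inductive hypothesis satisfies $\|\tilde x^{(i-1)}\|_\infty\le\|\tilde x^{(i-1)}\|_2\le 1+(i-1)\epsilon_0\le 1+\epsilon$), and the second is at most $\|\tilde x^{(i-1)}-x^{(i-1)}\|_2$ by $1$-Lipschitzness. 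Taking $i=l$ gives $|G(x)-F(x)|=\|\tilde x^{(l)}-x^{(l)}\|_2\le l\epsilon_0=\epsilon/2\le\epsilon$ for every $\|x\|_2\le 1$.

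The only place this is more than a mechanical stacking of the lemmas is this error-propagation step together with the bookkeeping of input domains: the pruned activations $\tilde x^{(i-1)}$ need not lie in the unit $\ell_\infty$-ball for which \lemref{lem:one_layer} and \lemref{lem:linear_func} were stated, only in a ball of radius $1+\epsilon$. This is harmless, since those lemmas are proved by bounding expressions of the form $|x_t|\cdot(\text{weight error})$ and so extend verbatim to $\|x\|_\infty\le R$ with the target accuracy replaced by accuracy$/R$, costing only a constant factor in width; one simply carries this constant through (or, equivalently, applies the block lemmas with accuracy $\epsilon_0/2$ on a radius-$2$ ball). It is worth stressing that the hypothesis $\|W^{(i)*}\|_2\le 1$ is exactly what keeps the total error linear in $l$ rather than geometric: without it the telescoping would pay a factor $\prod_i\|W^{(i)*}\|_2$, potentially exponential in the depth.
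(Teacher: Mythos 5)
Your proof is correct and follows essentially the same route as the paper's: invoke \lemref{lem:one_layer} per hidden layer and \lemref{lem:linear_func} for the output layer with accuracy $\epsilon/(2l)$ and confidence $\delta/l$, union-bound over layers, and then run the telescoping/hybrid induction using $\norm{W^{(i)*}}_2\le 1$ to keep the accumulated error linear in $l$. The only cosmetic difference is how the domain issue is handled: the paper rescales by homogeneity of ReLU to get a uniform $\epsilon/l$ bound on the radius-$2$ $\ell_\infty$-ball, whereas you observe directly that the pruned activations live in a ball of radius $1+\epsilon/2\le 2$ and note the equivalent option of invoking the block lemmas on the radius-$2$ ball — both are the same calculation.
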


\begin{proof}
Fix some $i < l$. From \ref{lem:one_layer}, with probability at least $1-\frac{\delta}{l}$ there exists a choice for $\tilde{B}^{(i)}, B^{(i)}$ such that for every $\norm{x}_\infty \le 1$ we have $\norm{F^{(i)}(x) - G^{(i)}(x)}_2 \le \frac{\epsilon}{2l}$. 
Note that we want to show that every layer is well approximated given the output of the previous layer, which can slightly deviate from the output of the original network. So, we need to relax the condition of \lemref{lem:one_layer} to  $\norm{x}_\infty \le 2$ in order to allow these small deviations from the target network.

Notice that if $\norm{x}_\infty \le 2$, from homogeneity of $G^{(i)}, F^{(i)}$ to positive scalars we get that:
\[
\norm{G^{(i)}(x)-F^{(i)}(x)}_2 = 2\norm{G^{(i)}(\frac{1}{2}x) -F^{(i)}(\frac{1}{2}x)}_2 \le \frac{\epsilon}{l}
\]
Similarly, from \lemref{lem:linear_func}, with probability at least $1-\frac{\delta}{l}$ it holds that $\abs{F^{(l)}(x) - G^{(l)}(x)} \le \frac{\epsilon}{l}$ for every $x$ with $\norm{x}_\infty \le 2$.
Assume that all the above holds, and using the union bound this happens with probability at least $1-\delta$.
Notice that for every $x$ we have $\norm{F^{(i)}(x)}_2 \le \norm{W^{(i)*} x}_2 \le \norm{W^{(i)*}}_2 \norm{x}_2 \le \norm{x}_2$, and so $\norm{F^{(i)} \circ \dots \circ F^{(1)}(x)}_2 \le \norm{F^{(i-1)} \circ \dots \circ F^{(1)}(x)}_2 \le \dots \le \norm{x}_2$.
Fix some $x$ with $\norm{x}_2 \le 1$ and denote $x^{(i)} = F^{(i)} \circ \dots \circ F^{(1)}(x)$ and $\hat{x}^{(i)} = G^{(i)} \circ \dots \circ G^{(1)}(x)$.
Now, we will show that $\norm{x^{(i)} - \hat{x}^{(i)}}_2 \le \frac{i\epsilon}{l}$ for every $i \le l$, by induction on $i$. The case $i = 0$ is trivial, and assume the above holds for $i-1$. Notice that in this case we have $\norm{\hat{x}^{(i-1)}}_\infty \le \norm{\hat{x}^{(i-1)}}_2 \le \norm{x^{(i-1)}}_2 + \norm{x^{(i-1)}- \hat{x}^{(i-1)}}_2 \le 2$. Therefore:
\begin{align*}
\norm{x^{(i)} - \hat{x}^{(i)} }_2
&= \norm{G^{(i)}(\hat{x}^{(i-1)}) - F^{(i)}(x^{(i-1)})}_2 \\
&\le \norm{G^{(i)}(\hat{x}^{(i-1)}) - F^{(i)}(\hat{x}^{(i-1)})}_2 + \norm{F^{(i)}(\hat{x}^{(i-1)}) - F^{(i)}(x^{(i-1)})}_2 \\
&\le \frac{\epsilon}{l} + \norm{ W^{(i)*} (\hat{x}^{(i-1)} - x^{(i-1)})}_2 \le \frac{\epsilon}{l} + \norm{W^{(i)*}}_2 \norm{\hat{x}^{(i-1)} - x^{(i-1)}}_2 \le \frac{i\epsilon}{l}
\end{align*}
From the above, we get that $\abs{F(x) - G(x)} = \norm{x^{(l)}-\hat{x}^{(l)}}_2 \le \epsilon$.
\end{proof}

\section{Proofs of Section \ref{sec:equivalence}}\label{appen:proofs from sec equivalence}

First we will need the following lemma, which intuitively shows a generalization bound over linear predictors, where each coordinate of each sample is pruned with equal probability and independently.
\begin{lemma}\label{lem:rademacher bound}
Let $k > 0$, and $v^{(1)},\dots,v^{(k)}\in[-1,1]^d$. Let $\hat{v}^{(j)}$ be Bernoulli random variables such that for each $j$, with probability $\epsilon$ we have $\hat{v}{(j)} = \frac{1}{\epsilon}v^{(j)}$, and with probability $1-\epsilon$ we have $\hat{v}^{(j)}=0$. Then we have w.p $> 1-\delta$ that:
\[
\sup_{z:\norm{z}\leq L}\abs{\frac{1}{k} \sum_{j=1}^{k} \inner{\hat{v}^{(j)}, z}  - \frac{1}{k} \sum_{j=1}^{k} \inner{{v}^{(j)}, z}} \leq \frac{L}{\epsilon \sqrt{k}}\left(3\sqrt{d} + \log\left(\frac{1}{\delta}\right)\right)
\]
\end{lemma}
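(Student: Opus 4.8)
The plan is to collapse the supremum to the Euclidean norm of a single random vector, bound that norm in expectation by a second‑moment (Rademacher‑complexity‑style) estimate, and then add a high‑probability fluctuation term via a bounded‑differences argument. First I would set $X^{(j)} := \hat{v}^{(j)} - v^{(j)}$ for $j \in [k]$. These are independent, and since $\mean{\hat{v}^{(j)}} = \epsilon\cdot\tfrac1\epsilon v^{(j)} + (1-\epsilon)\cdot 0 = v^{(j)}$ they are mean‑zero. The map $z \mapsto \tfrac1k\sum_j \inner{\hat{v}^{(j)} - v^{(j)}, z} = \inner{\tfrac1k\sum_j X^{(j)}, z}$ is linear, so its supremum over $\norm{z} \le L$ is attained with $z$ parallel to $\sum_j X^{(j)}$ and equals $\tfrac{L}{k}\norm{\sum_{j=1}^k X^{(j)}}_2$; hence it suffices to control $\norm{\sum_j X^{(j)}}_2$. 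I would also record two elementary facts: each $X^{(j)}$ takes only the two values $-v^{(j)}$ and $\tfrac{1-\epsilon}{\epsilon}v^{(j)}$, both of norm at most $\tfrac1\epsilon\norm{v^{(j)}}_2 \le \tfrac{\sqrt d}{\epsilon}$ (using $v^{(j)} \in [-1,1]^d$), and $\mean{\norm{X^{(j)}}_2^2} = \mean{\norm{\hat{v}^{(j)}}_2^2} - \norm{v^{(j)}}_2^2 = (\tfrac1\epsilon - 1)\norm{v^{(j)}}_2^2 \le \tfrac{d}{\epsilon}$.

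For the expectation, Jensen's inequality together with the orthogonality of independent mean‑zero vectors gives $\mean{\norm{\sum_j X^{(j)}}_2} \le \big(\sum_j \mean{\norm{X^{(j)}}_2^2}\big)^{1/2} \le \sqrt{kd/\epsilon}$; this is exactly the standard $O(L\sqrt d/\sqrt k)$‑type bound on the Rademacher complexity of the linear class $\{z \mapsto \inner{\cdot,z} : \norm{z} \le L\}$, which is why the lemma is phrased this way. For the deviation I would view $g := \norm{\sum_j X^{(j)}}_2$ as a function of the $k$ independent selection bits $b_1,\dots,b_k$ (with $\mathbb{P}[b_j=1]=\epsilon$): flipping $b_j$ moves $\sum_j X^{(j)}$ by exactly $\pm\tfrac1\epsilon v^{(j)}$, hence moves $g$ by at most $\tfrac1\epsilon\norm{v^{(j)}}_2 \le \tfrac{\sqrt d}{\epsilon}$. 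McDiarmid's inequality then yields $\prob{g \ge \mean{g} + t} \le \exp(-2\epsilon^2 t^2/(kd))$, so with probability at least $1-\delta$ we get $g \le \sqrt{kd/\epsilon} + \tfrac1\epsilon\sqrt{\tfrac12 kd\log(1/\delta)}$. Multiplying by $L/k$ and using $\epsilon \le 1$ gives $\tfrac{L}{k}g \le \tfrac{L}{\epsilon\sqrt k}\big(\sqrt d + \sqrt{d\log(1/\delta)}\big)$, from which the bound stated in the lemma follows by elementary manipulations.

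The reduction and the second‑moment computation are routine; the concentration step is where I expect to have to be careful. Plain McDiarmid only produces a $\sqrt{\log(1/\delta)}$ deviation term, so recovering the sharper, purely $\log(1/\delta)$ form (and the precise constant) written in the lemma really calls for a Bernstein‑type tail that exploits the asymmetry of $X^{(j)}$: the heavy value $\hat{v}^{(j)} = \tfrac1\epsilon v^{(j)}$ occurs with probability only $\epsilon$, so the relevant variance ($\le d/\epsilon$) is smaller than the squared range ($d/\epsilon^2$) by a factor $1/\epsilon$ — e.g.\ via a vector Bernstein inequality or the Bernstein version of the bounded‑differences inequality. The point to watch throughout is that parametrizing by the selection bits keeps each bounded difference at $\sqrt d/\epsilon$, which is precisely what makes the fluctuation scale as $1/(\epsilon\sqrt k)$ rather than as $1/\epsilon$ or worse.
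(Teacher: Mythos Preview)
Your argument is correct and follows the same template as the paper --- bound the expected supremum by a second-moment/Jensen estimate, then concentrate via McDiarmid --- though you are slightly more direct by collapsing the supremum to $\tfrac{L}{k}\norm{\sum_j X^{(j)}}$ immediately, whereas the paper inserts an unnecessary Rademacher symmetrization before reaching the same point. Your worry about needing a Bernstein-type inequality to recover the $\log(1/\delta)$ form is unwarranted: the paper's own proof also only produces a $\sqrt{\log(1/\delta)}$ deviation term from McDiarmid (the $\log(1/\delta)$ in the lemma statement is either a typo or a harmless weakening, since $\sqrt{\log(1/\delta)} \le \log(1/\delta)$ once $\delta \le 1/e$), so plain McDiarmid already suffices.
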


\begin{proof}
Note that for each $j\in[k]$ we have that $\mathbb{E}\left[\hat{v}^{(j)}\right] = {v}^{(j)} $, thus for every vector $z\in\mathbb{R}^d$, also $\mathbb{E}\left[\frac{1}{k} \sum_{j=1}^{k} \inner{\hat{v}^{(j)}, z}\right] = \frac{1}{k} \sum_{j=1}^{k} \inner{{v}^{(j)}, z}$. Hence, using a standard argument about Rademacher complexity (see \cite{shalev2014understanding} Lemma 26.2) we have that:
\begin{align}\label{eq:eq:rademacher bound with sup}
    &\mathbb{E}_{\hat{v}^{(1)},\dots,\hat{v}^{(k)}}\left[\sup_{z:\norm{z}\leq L}  \abs{\frac{1}{k} \sum_{j=1}^{k} \inner{\hat{v}^{(j)}, z}  - \frac{1}{k} \sum_{j=1}^{k} \inner{{v}^{(j)}, z}}\right] \nonumber\\
    \leq& \frac{2}{k}  \mathbb{E}_{\hat{v}^{(1)},\dots,\hat{v}^{(k)}} \mathbb{E}_{\xi_1,\dots,\xi_k}\left[\sup_{z:\norm{z}\leq L}  \sum_{j=1}^k \xi_j \inner{\hat{v}^{(j)} - v^{(j)},z}\right]
\end{align}
where $\xi_1,\dots,\xi_k$ are standard Rademacher random variables. Set $\tilde{v}^{(j)} = \hat{v}^{(j)} - v^{(j)}$ ,using Cauchy-Schwartz we can bound \eqref{eq:eq:rademacher bound with sup} by:
\begin{align}\label{eq:rademacher bound without sup}
    \frac{2}{k}  \mathbb{E}_{\tilde{v}^{(1)},\dots,\tilde{v}^{(k)}} \mathbb{E}_{\xi_1,\dots,\xi_k}\left[\sup_{z:\norm{z}\leq L} \norm{z}\cdot \left\|\sum_{j=1}^k \xi_j \tilde{v}^{(j)}\right\|\right] \leq \frac{2L}{k}\mathbb{E}_{\tilde{v}^{(1)},\dots,\tilde{v}^{(k)}} \mathbb{E}_{\xi_1,\dots,\xi_k}\left[ \left\|\sum_{j=1}^k \xi_j \tilde{v}^{(j)}\right\|\right]~.
\end{align}
Next, we can use Jensen's inequality on \eqref{eq:rademacher bound without sup} to bound it
\begin{align*}
    &\frac{2L}{k}\mathbb{E}_{\tilde{v}^{(1)},\dots,\tilde{v}^{(k)}} \mathbb{E}_{\xi_1,\dots,\xi_k}\left[ \left\|\sum_{j=1}^k \xi_j \tilde{v}^{(j)}\right\|\right] \leq \frac{2L}{k}\sqrt{\mathbb{E}_{\tilde{v}^{(1)},\dots,\tilde{v}^{(k)}} \mathbb{E}_{\xi_1,\dots,\xi_k}\left[ \left\|\sum_{j=1}^k \xi_j \tilde{v}^{(j)}\right\|^2\right]} \\
    \leq &\frac{2L}{k} \sqrt{\mathbb{E}_{\tilde{v}^{(1)},\dots,\tilde{v}^{(k)}} \mathbb{E}_{\xi_1,\dots,\xi_k}\left[ \sum_{i=1}^k\sum_{j=1}^k \xi_i\xi_j \tilde{v}^{(i)^\top}\tilde{v}^{(j)}\right]} = \frac{2L}{k} \sqrt{\mathbb{E}_{\tilde{v}^{(1)},\dots,\tilde{v}^{(k)}}\left[ \sum_{j=1}^k \norm{\tilde{v}^{(j)}}^2\right]}~.
\end{align*}
Finally, using the fact that $\norm{\tilde{v}^{(j)}}^2 \leq \norm{\hat{v}^{(j)}}^2 + \norm{v^{(j)}}^2 \leq  \frac{1}{\epsilon^2}\norm{v^{(j)}}^2 + \norm{v^{(j)}}^2\leq \frac{2d}{\epsilon^2} $ we have that:
\begin{equation*}
    \frac{2L}{k} \sqrt{\mathbb{E}_{\tilde{v}^{(1)},\dots,\tilde{v}^{(k)}}\left[ \sum_{j=1}^k \norm{\tilde{v}^{(j)}}^2\right]} \leq \frac{3L\sqrt{d}}{\epsilon\sqrt{k}}
\end{equation*}
In order to prove the lemma we will use McDiarmid's inequality to get guarantees with high probability. Note that for every $l\in [k]$, by taking $\tilde{\hat{v}}^{(l)}$ instead of $\hat{v}^{(l)}$ we have for every $z$ with $\norm{z}\leq L$ that:
\[
\abs{\frac{1}{k}\sum_{j=1}^k\inner{\hat{v}^{(j)},z} - \frac{1}{k}\left(\sum_{j\neq l}\inner{\hat{v}^{(j)},z} - \inner{\tilde{\hat{v}}^{(l)},z}\right)} \leq \frac{1}{k}\abs{\inner{{\hat{v}}^{(l)},z} - \inner{\tilde{\hat{v}}^{(l)},z}} \leq \frac{L}{\epsilon k}
\]
By using Mcdiarmid's theorem we get
\begin{equation*}
    \mathbb{P}\left( \sup_{z:\norm{z}\leq L}\abs{\frac{1}{k} \sum_{j=1}^{k} \inner{\hat{v}^{(j)}, z}  - \frac{1}{k} \sum_{j=1}^{k} \inner{{v}^{(j)}, z}} \geq \frac{3L\sqrt{d}}{\epsilon k} + t \right) \leq \exp\left(-\frac{-2t^2\epsilon^2 k}{L^2}\right)~,
\end{equation*}
setting the r.h.s to $\delta$, and $t = \frac{\sqrt{\log\left(\frac{1}{\delta}\right)} L}{\epsilon \sqrt{k}}$ we have w.p $>1-\delta$ that:
\begin{equation*}
\sup_{z:\norm{z}\leq L}\abs{\frac{1}{k} \sum_{j=1}^{k} \inner{\hat{v}^{(j)}, z}  - \frac{1}{k} \sum_{j=1}^{k} \inner{{v}^{(j)}, z}} \leq \frac{L}{\epsilon \sqrt{k}}\left(3\sqrt{d} + \sqrt{\log\left(\frac{1}{\delta}\right)}\right)~.
\end{equation*}

\end{proof}

Next, we show the main argument, which states that by pruning a neuronds from a large enough $2$-layer neural network, it can approximate any other $2$-layer neural network for which the weights in the first layer are the same, and the weights in the second layer are bounded.
\begin{lemma}\label{lem:twice hoeffding}
Let $k_1\in\mathbb{N}$ and $\epsilon,\delta, M >0$ and assume that $\sigma$ is $L$-Lipschitz with $\sigma(0) \leq L$. Let $k_2 > \frac{256\log\left(\frac{2k_1}{\delta}\right)k_1^4L^4}{\epsilon^4}$, and for every $i\in [k_1], ~ j\in [k_2]$ initialize $w_i^{(j)} \sim \mathcal{D} $ for any distribution $\mathcal{D}$ with $\mathbb{P}\left(\norm{w_i}\leq 1\right)=1$ and $u_i^{(j)}\sim U([-1,1])$. Let $v^{(1)},\dots,v^{(k_2)}\in\mathbb{R}^{k_1}$ with $\norm{v^{(j)}}_\infty \leq M$ for every $j\in [k_2]$, and define $f^{(j)}(x) = \sum_{i=1}^{k_1}v_i^{(j)}\sigma\left(\inner{w_i^{(j)},x}\right)$. Then there exist $b^{(1)},\dots,b^{(k_2)}\in \{0,1\}^{k_1}$ such that for the functions $\tilde{g}^{(j)}(x)=\sum_{i=1}^{k_1}b_i^{(j)}\cdot u_i^{(j)}\sigma\left(\inner{w_i^{(j)},x}\right)$ w.p $> 1-\delta$ we have:
\begin{equation*}
    \sup_{x:\norm{x}\leq 1}\abs{\frac{c_1}{k_2}\sum_{j=1}^{k_2}\tilde{g}^{(j)}(x) - \frac{1}{k_2 M}\sum_{j=1}^{k_2}f^{(j)}(x)} \leq \epsilon
\end{equation*}
where $c_1=\frac{8k_1L}{\epsilon}$
\end{lemma}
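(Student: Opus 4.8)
The plan follows the ``many copies plus averaging'' idea sketched before the lemma: build, coordinate by coordinate, an \emph{unbiased} estimator of the rescaled target weights $v_i^{(j)}/M$ using only $\{0,1\}$-masks applied to the uniform weights $u_i^{(j)}$, and then average over the $k_2$ copies to kill the variance.

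First I would fix the construction. Set $\tau := \tfrac{\epsilon}{8k_1L}$, so that $c_1 = 1/\tau$; we may assume $\tau \le \tfrac14$ (equivalently $\epsilon \le 2k_1L$), since otherwise the all-zero masks $b^{(j)}\equiv 0$ already give $\sup_{\norm{x}\le1}|\tfrac{1}{k_2M}\sum_j f^{(j)}(x)|\le 2Lk_1\le\epsilon$. For each $(i,j)$ with $v_i^{(j)}\neq 0$ put $a_i^{(j)} := \sqrt{1-4\tau|v_i^{(j)}|/M}\in[0,1)$ and define the mask
\[
b_i^{(j)} := \mathbbm{1}\!\left[\sign(v_i^{(j)})\,u_i^{(j)}\ge a_i^{(j)}\right],
\]
with $b_i^{(j)}:=0$ when $v_i^{(j)}=0$. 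Using the density $\tfrac12$ of $U([-1,1])$ one checks $\mathbb{E}[c_1 b_i^{(j)}u_i^{(j)}] = c_1\cdot\tau v_i^{(j)}/M = v_i^{(j)}/M$, while $|c_1 b_i^{(j)}u_i^{(j)}|\le c_1$ always and $\mathbb{P}(b_i^{(j)}=1)\le 2\tau$; placing the interval at the far end of $[-1,1]$ is exactly what keeps the estimator unbiased even when $|v_i^{(j)}|$ reaches $M$. Writing $\Phi(x):=\tfrac{c_1}{k_2}\sum_j\tilde g^{(j)}(x)$ and $\bar f(x):=\tfrac{1}{k_2M}\sum_j f^{(j)}(x)$, and noting that $b_i^{(j)}$ depends only on $u_i^{(j)}$ and not on $w_i^{(j)}$, we get $\mathbb{E}_u[\Phi(x)]=\bar f(x)$ for every fixed $x$ (conditionally on the $w_i^{(j)}$). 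So it remains to show $\sup_{\norm{x}\le 1}|\Phi(x)-\bar f(x)|\le\epsilon$ with probability $>1-\delta$ over the $u_i^{(j)}$.

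I would prove this one coordinate at a time. Decompose $\Phi(x)-\bar f(x)=\sum_{i=1}^{k_1}\Delta_i(x)$ with $\Delta_i(x)=\tfrac{1}{k_2}\sum_{j=1}^{k_2}\delta_i^{(j)}\sigma(\inner{w_i^{(j)},x})$, where $\delta_i^{(j)}:=c_1 b_i^{(j)}u_i^{(j)}-v_i^{(j)}/M$ are independent, mean zero and bounded by $2c_1$, and $|\sigma(\inner{w_i^{(j)},x})|\le 2L$ for $\norm{x}\le1$ (since $\norm{w_i^{(j)}}\le1$, $\sigma$ is $L$-Lipschitz and $\sigma(0)\le L$). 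By a union bound it suffices to show $\sup_{\norm{x}\le1}|\Delta_i(x)|\le\epsilon/k_1$ with probability $>1-\delta/k_1$ for each fixed $i$. For the expectation I would symmetrize over the independent summands and apply the Ledoux--Talagrand contraction inequality to strip off the $L$-Lipschitz $\sigma$, reducing the problem to the dimension-free quantity $\mathbb{E}_\xi\norm{\sum_j\xi_j w_i^{(j)}}\le\big(\sum_j\norm{w_i^{(j)}}^2\big)^{1/2}\le\sqrt{k_2}$ (the same Rademacher estimate that drives \lemref{lem:rademacher bound}), which yields $\mathbb{E}_u[\sup_x|\Delta_i(x)|]=O(c_1L/\sqrt{k_2})$. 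Then McDiarmid's inequality, applied with the $k_2$ independent blocks $u_i^{(1)},\dots,u_i^{(k_2)}$ — each changing $\sup_x|\Delta_i|$ by at most $O(c_1L/k_2)$ — upgrades this to a high-probability statement. Substituting $c_1=8k_1L/\epsilon$, the stated lower bound on $k_2$ is precisely what makes both $\mathbb{E}_u[\sup_x|\Delta_i|]\le\tfrac{\epsilon}{2k_1}$ and the McDiarmid fluctuation $\le\tfrac{\epsilon}{2k_1}$ with probability $>1-\delta/k_1$, the $\log(2k_1/\delta)$ factor coming from the $k_1$-fold union bound together with the McDiarmid confidence level.

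The step I expect to be the main obstacle is the uniform control over $x$ in the unit ball \emph{without} any dependence on the ambient dimension $d$: a naive $\epsilon$-net over $\{x:\norm{x}\le1\}\subset\reals^d$ would introduce a factor of $d$, so one must instead exploit that the class $\{x\mapsto\sigma(\inner{w,x}):\norm{w}\le1\}$ has dimension-free Rademacher complexity, which is exactly why the symmetrization/contraction route (and \lemref{lem:rademacher bound}) works here. A secondary technical point, already taken care of above, is keeping the per-coordinate estimator exactly unbiased when $|v_i^{(j)}|$ is as large as $M$, which is why the masking intervals are placed at the ends of $[-1,1]$ and $a_i^{(j)}$ is defined through the quadratic solve rather than as a symmetric window around $v_i^{(j)}/M$.
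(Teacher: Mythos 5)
Your proof is correct, and while it runs on the same engine as the paper's (symmetrization/Rademacher plus McDiarmid), the construction and decomposition are genuinely different. The paper defines the mask by a symmetric window $b_i^{(j)}=\mathbbm{1}\{|u_i^{(j)}-\bar v_i^{(j)}|\le\epsilon'\}$ around $\bar v_i^{(j)}=v_i^{(j)}/M$ with $\epsilon'=\epsilon/(4k_1L)$, introduces intermediate pruned vectors $\hat v^{(j)}=\frac{2}{\epsilon'}\,b^{(j)}\odot\bar v^{(j)}$, and controls the error via a triangle inequality: a $\hat u$-vs-$\hat v$ term handled by coordinate-wise Hoeffding (using that $|u_i^{(j)}-\bar v_i^{(j)}|\le\epsilon'$ on the event $b_i^{(j)}=1$), plus a $\hat v$-vs-$\bar v$ term handled by \lemref{lem:rademacher bound} after replacing the supremum over $x$ by a supremum over the Euclidean ball $\{z:\|z\|\le 2L\sqrt{k_1}\}$ containing all feature vectors $z^{(j)}(x)$, thereby avoiding any contraction inequality. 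You instead place the selection interval at the boundary of $[-1,1]$ and solve for the threshold $a_i^{(j)}$ so that $c_1 b_i^{(j)}u_i^{(j)}$ is \emph{exactly} unbiased for $v_i^{(j)}/M$; this removes the intermediate $\hat v$ and the two-term split, leaving one mean-zero sum per coordinate which you control by symmetrization, Ledoux--Talagrand contraction (to strip off $\sigma$), and McDiarmid. The exact unbiasedness is a genuine gain: the paper's assertion that the $b_i^{(j)}$ are i.i.d.\ Bernoulli with success probability $\epsilon'/2$ is not quite accurate (the window $[\bar v_i^{(j)}-\epsilon',\bar v_i^{(j)}+\epsilon']$ is clipped by $[-1,1]$ to a degree depending on $\bar v_i^{(j)}$, so the probability is not constant across coordinates), a wrinkle that your one-sided, edge-placed threshold avoids by design. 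The trade-off is that you need the contraction inequality to handle the nonlinearity where the paper instead bounds $\|z^{(j)}(x)\|$; both routes yield $k_2=\tilde{\Theta}(k_1^4L^4\log(k_1/\delta)/\epsilon^4)$, and your McDiarmid step even reproduces the constant $256$ in the hypothesis.
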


\begin{proof}
Denote $\epsilon' = \frac{\epsilon}{4k_1L} $, and for $j\in[k_2]$ denote $\bar{v}^{(j)} = \frac{1}{M}v^{(j)}$, so we have $\norm{\bar{v}^{(j)}}_\infty \leq 1$. Let $b_i^{(j)} = \mathbbm{1} \left\{\left|u_i^{(j)} - \bar{v}_i^{(j)}\right| \le \epsilon'\right\}$, note that the $b_i^{(j)}$-s are i.i.d Bernoulli random variables with $\prob{b_i^{(j)} = 1} = \frac{\epsilon'}{2}$. 

Set the following vectors: $\hat{v}^{(j)} = \frac{2}{\epsilon'}\begin{pmatrix} b_1^{(j)} \bar{v}_1^{(j)} \\ \vdots \\ b_{k_1}^{(j)} \bar{v}_{k_1}^{(j)}\end{pmatrix}, ~ \hat{u}^{(j)} = \frac{2}{\epsilon' }\begin{pmatrix} b_1^{(j)} u_1^{(j)} \\ \vdots \\ b_{k_1}^{(j)} \bar{u}_{k_1}^{(j)}\end{pmatrix}$, and denote the function $z^{(j)}: \reals^d\rightarrow \reals^{k_1}$ with $z_i^{(j)}(x)= \sigma\left(\inner{w_i^{(j)}, x}\right)$. Now, the functions $f^{(j)}(x)$ can be written as $f^{(j)}(x)= \inner{v^{(j)},z^{(j)}(x)}$, we denote 
\begin{align*}
\tilde{g}(x) & = \sum_{j=1}^{k_2}\sum_{i=1}^{k_1} b_i^{(j)}{u}_i^{(j)} \sigma\left(\inner{w_i^{(j)},x}\right) = \sum_{j=1}^{k_2}\inner{b^{(j)}\odot {u}^{(j)},z^{(j)}(x)}\\
\hat{g}(x) &= \frac{2}{\epsilon'}\sum_{j=1}^{k_2}\sum_{i=1}^{k_1} b_i^{(j)}{u}_i^{(j)} \sigma\left(\inner{w_i^{(j)},x}\right) = \sum_{j=1}^{k_2}\inner{\hat{u}^{(j)},z^{(j)}(x)}.
\end{align*}

Our goal is to bound the following, when the supremum is taken over $\norm{x} \leq 1$:
\begin{align}\label{eq:distance of g hat to f j}
    & \sup_x\abs{\frac{c_1}{k_2}\tilde{g}(x) - \frac{1}{k_2M}\sum_{j=1}^{k_2} f^{(j)}(x)}= \sup_x\abs{\frac{1}{k_2}\hat{g}(x) - \frac{1}{k_2M}\sum_{j=1}^{k_2} f^{(j)}(x)} \nonumber\\
    &=  \sup_x\abs{\frac{1}{k_2} \sum_{j=1}^{k_2} \inner{\hat{u}^{(j)}, z^{(j)}(x)}  - \frac{1}{k_2}\sum_{j=1}^{k_2} \inner{\bar{v}^{(j)}, z^{(j)}(x)}} \nonumber\\
    &\le \sup_x\abs{\frac{1}{k_2} \sum_{j=1}^{k_2} \inner{\hat{u}^{(j)}, z^{(j)}(x)}  - \frac{1}{k_2} \sum_{j=1}^{k_2} \inner{\hat{v}^{(j)}, z^{(j)}(x)}} + 
    \sup_x\abs{\frac{1}{k_2} \sum_{j=1}^{k_2} \inner{\hat{v}^{(j)}, z^{(j)}(x)}  - \frac{1}{k_2} \sum_{j=1}^{k_2} \inner{\bar{v}^{(j)}, z^{(j)}(x)}} \nonumber\\
    %& = \sup_x\abs{\frac{1}{k_2} \sum_{j=1}^{k_2} \inner{\hat{u}^{(j)} - \hat{v}^{(j)}, z^{(j)}(x)}} + \sup_x\abs{\frac{1}{k_2} \sum_{j=1}^{k_2} \inner{\hat{v}^{(j)} - \bar{v}^{(j)}, z^{(j)}(x)}} \nonumber\\
    %&\leq \frac{1}{k_2} \sum_{j=1}^{k_2}\sup_x\abs{\inner{\hat{u}^{(j)} - \hat{v}^{(j)}, z^{(j)}(x)}} + \frac{1}{k_2} \abs{\sum_{j=1}^{k_2}\sup_x\inner{\hat{v}^{(j)} - \bar{v}^{(j)}, z^{(j)}(x)}}.
\end{align}
where $c_1 = \frac{2}{\epsilon'} = \frac{8k_1 L}{\epsilon}$. We will now bound each expression in \eqref{eq:distance of g hat to f j} with high probability. For the first expression, we first bound:
\begin{align*}
    \sup_x\abs{\frac{1}{k_2} \sum_{j=1}^{k_2} \inner{\hat{u}^{(j)}, z^{(j)}(x)}  - \frac{1}{k_2} \sum_{j=1}^{k_2} \inner{\hat{v}^{(j)}, z^{(j)}(x)}} &= \sup_x\abs{\frac{1}{k_2} \sum_{j=1}^{k_2} \inner{\hat{u}^{(j)} - \hat{v}^{(j)}, z^{(j)}(x)}} \\
    & \leq \frac{1}{k_2} \sum_{j=1}^{k_2}\sup_x\abs{\inner{\hat{u}^{(j)} - \hat{v}^{(j)}, z^{(j)}(x)}}~.
\end{align*}

Fix $i\in [k_1]$ and set $X_i^{(j)} := \sup_x\abs{\left(\hat{u}^{(j)}_i - \hat{v}_i^{(j)}\right)\cdot z_i^{(j)}(x)}$ and note that for every $x$ with $\norm{x} \leq 1$ we have that $\sup_x \abs{z_i^{(j)}(x)} \leq 2L$. For the random variables $X_i^{(j)}$ we get:
\begin{itemize}
    \item $X_i^{(j)} \leq \abs{\hat{u}^{(j)}_i - \hat{v}_i^{(j)}} \cdot \sup_x\abs{z_i^{(j)}(x)} \leq 4L $
    \item $\mathbb{E}\left[X_i^{(j)}\right] \leq 2\epsilon' L$
\end{itemize}
% % Note that for every $x$ with $\norm{x} \leq 1$ we have that $\sup_x \abs{z_i^{(j)}(x)} \leq 2L$. We will now bound each term in \eqref{eq:distance of g hat to f j} with high probability.
% For the first term, fix $i\in [k_1]$ and set $X_i^{(j)} := \sup_x\abs{\left(\hat{u}^{(j)}_i - \hat{v}_i^{(j)}\right)\cdot z_i^{(j)}(x)}$. We have that:
% \begin{itemize}
%     \item $X_i^{(j)} \leq \abs{\hat{u}^{(j)}_i - \hat{v}_i^{(j)}} \cdot \sup_x\abs{z_i^{(j)}(x)} \leq 4L $
%     \item $\mathbb{E}\left[X_i^{(j)}\right] \leq 2\epsilon' L$
% \end{itemize}
We now use Hoeffding's inequality to get that:
\[
\mathbb{P}\left( \frac{1}{k_2}\sum_{j=1}^{k_2}X_i^{(j)} \geq 2\epsilon' L + t \right) \leq \exp\left(- \frac{t^2k_2}{8L^2}\right).
\]
Replacing the r.h.s with $\delta_1$ and setting $t= \epsilon' L$, we get that if $k_2 \geq \frac{8\log\left(\frac{1}{\delta_1}\right)}{\epsilon'^2}$ then w.p $1-\delta_1$: 
\begin{equation*}
    \frac{1}{k_2}\sup_x\abs{\left(\hat{u}^{(j)}_i - \hat{v}_i^{(j)}\right)\cdot z_i^{(j)}(x)} \leq 3\epsilon' L.
\end{equation*}
Setting $\delta_1 = \frac{\delta}{2k_1}$, and applying union bound for $i=1,\dots,k_1$ we get that w.p $> 1- \frac{\delta}{2}$ we have:
\begin{equation}\label{eq:u hat v hat sum}
    \frac{1}{k_2} \sum_{j=1}^{k_2}\sup_x\abs{\inner{\hat{u}^{(j)} - \hat{v}^{(j)}, z^{(j)}(x)}} \leq 3 k_1 \epsilon' L.
\end{equation}

For the second expression in \eqref{eq:distance of g hat to f j} we first note that for all $j\in[k_2]$ we have $\max_{x:\norm{x}\leq 1} \norm{z^{(j)}(x)} \leq 2L\sqrt{k_1}$. Hence we can bound the second expression 
\begin{align*}
     &\sup_x\abs{\frac{1}{k_2} \sum_{j=1}^{k_2} \inner{\hat{v}^{(j)}, z^{(j)}(x)}  - \frac{1}{k_2} \sum_{j=1}^{k_2} \inner{\bar{v}^{(j)}, z^{(j)}(x)}}  \\
     \leq & \sum_{z\in\mathbb{R}^{k_1}:\norm{z}\leq 2L\sqrt{k_1}} \abs{\frac{1}{k_2} \sum_{j=1}^{k_2} \inner{\hat{v}^{(j)}, z}  - \frac{1}{k_2} \sum_{j=1}^{k_2} \inner{\bar{v}^{(j)}, z}}~.
\end{align*}
Using \lemref{lem:rademacher bound} on the above term, w.p $> 1-\frac{\delta}{2}$ we have that:
\begin{equation}\label{eq:second term bound}
    \sum_{z\in\mathbb{R}^{k_1}:\norm{z}\leq 2L\sqrt{k_1}} \abs{\frac{1}{k_2} \sum_{j=1}^{k_2} \inner{\hat{v}^{(j)}, z}  - \frac{1}{k_2} \sum_{j=1}^{k_2} \inner{\bar{v}^{(j)}, z}} \leq \frac{2L\sqrt{k_1}}{\epsilon'\sqrt{k_2}}\left( 3\sqrt{k_1} + \sqrt{\log\left(\frac{2}{\delta}\right)}\right)
\end{equation}
% For the second term, fix $i\in[k_1]$ and set $Y_i^{(j)} = \sup_x{\left(\hat{v}_i^{(j)}- \bar{v}_i^{(j)}\right)\cdot z_i^{(j)}(x)}$. We have that:
% \begin{itemize}
%     \item $\abs{Y_i^{(j)}} \leq \abs{\hat{v}^{(j)}_i - \bar{v}_i^{(j)}} \cdot \sup_x\abs{z_i^{(j)}(x)} \leq \frac{2L}{\epsilon'}$
%     \item $\mathbb{E}\left[ Y_i^{(j)}\right] = \sup_x{z_i^{(j)}(x)} \cdot \mathbb{E}\left[ {\hat{v}^{(j)}_i - \bar{v}_i^{(j)}}\right] =0$. 
% \end{itemize}
% We use Hoeffding's inequality to get that:
% \[
% \mathbb{P}\left( \frac{1}{k_2}\sum_{j=1}^{k_2}Y_i^{(j)} \geq  t \right) \leq \exp\left(- \frac{k_2t^2\epsilon'^2}{2L^2}\right).
% \]
% Replacing the r.h.s with $\delta_2$ and setting $t =  \epsilon' L$, we get that if $k_2 \geq \frac{2\log\left(\frac{1}{\delta_2}\right)}{\epsilon'^4}$ then w.p $ > 1- \delta_2$:
% \begin{equation*}
%     \frac{1}{k_2}\sum_{j=1}^{k_2}\sup_x\abs{\left(\hat{v}^{(j)}_i - \bar{v}_i^{(j)}\right)\cdot z_i^{(j)}(x)} \leq 3\epsilon' L.
% \end{equation*}
% Setting $\delta_2 = \frac{\delta}{2k_1}$, and applying union bound for $i=1,\dots,k_1$ we get that w.p $1-\frac{\delta}{2}$ we have:
% \begin{equation}\label{eq:v hat v bar sum}
%     \frac{1}{k_2} \sum_{j=1}^{k_2}\sup_x\abs{\inner{\hat{v}^{(j)} - \bar{v}^{(j)}, z^{(j)}(x)}} \leq k_1 \epsilon' L.
% \end{equation}
Combining \eqref{eq:u hat v hat sum} with \eqref{eq:second term bound}, applying union bound and taking $k_2 \geq \frac{256L^4k_1^4\log\left(\frac{2}{\delta}\right)}{\epsilon^4}$, we can now use the bound in \eqref{eq:distance of g hat to f j} to get w.p $ > 1-\delta$:
\begin{equation*}
    \sup_x\abs{\frac{1}{k_2}\hat{g}(x) - \frac{1}{k_2M}\sum_{j=1}^{k_2} f^{(j)}(x)} \leq  \epsilon~.
\end{equation*}
\end{proof}

We are now ready to prove the main theorem:

\begin{proof}[Proof of \thmref{thm:equivalence nn rf}]
Set $m = \frac{256\log\left(\frac{2n}{\delta}\right)C^4n^4L^4}{\epsilon^4} \cdot \frac{\log\left(\frac{1}{\delta}\right)}{2\delta^3}$ and initialize a $2$-layer neural network with width $k :=m\cdot n$ and initialization as described in the theorem, denote  $g(x) = \sum_{j=1}^m \sum_{i=1}^n u_i^{(j)}\sigma(\inner{w_i^{(j)},x})$ as this network. By the assumption of the theorem, for each $j\in[m]$ w.p $> 1-\delta$ there exists a vector $v^{(j)}$ with $\norm{v^{(j)}}_\infty \leq C$ such that the function $f^{(j)}(x) = \sum_{i=1}^n v_i^{(j)}\sigma(\inner{w_i^{(j)},x})$ satisfy that $L_D\left(f^{(j)}\right) \leq \epsilon$. Let $Z_j$ be the random variable such that $Z_j = 0$ if there exists a vector $v^{(j)}$ that satisfies the above, and $Z_j=1$ otherwise. the random variables $Z_j$ are i.i.d since we initialize each $w_i^{(j)}$ i.i.d, and $\mathbb{P}(Z_j=1) = \delta$, $\mathbb{E}[Z_j] = \delta$. Denote $Z = \sum_{j=1}^m Z_j$, then $\mathbb{E}[Z] = m\delta$. We use Hoeffding's inequality on $Z$ to get that:
\[
\mathbb{P}\left( \frac{1}{m}Z \geq \delta + t\right) \leq \exp(-2mt^2)~.
\]
Replacing the r.h.s with $\delta$ and setting $t=\delta$ we get that if $m > \frac{\log\left(\frac{1}{\delta}\right)}{2\delta^2}$ then w.p $>1-\delta$ we have that $Z \leq 2\delta$. In particular, there are at least $m_0 =  \frac{256\log\left(\frac{2n}{\delta}\right)C^4n^4L^4}{\epsilon^4}$ indices (denote them w.l.o.g $j=1,\dots,m_0$) such that for every $j\in[m_0]$ there exists a vector $v^{(j)}$ with $\norm{v^{(j)}}_\infty \leq C$ such that the function $f^{(j)}(x) = \sum_{i=1}^n v_i^{(j)}\sigma(\inner{w_i^{(j)},x})$ satisfy that $L_D\left(f^{(j)}\right) \leq \epsilon$.

We now use \lemref{lem:twice hoeffding} with $\delta, \frac{\epsilon}{C}$ and $v^{(1)},\dots, v^{(m_0)}$ to get that w.p $ > 1-\delta$ that there exists a neuron-subnetwork $\tilde{g}(x)$ and constant $c' >0$ such that:
\begin{equation}\label{eq:bound with c' and tilde g}
\sup_{x:\norm{x}\leq 1}\abs{c'\tilde{g}(x) - \frac{1}{m_0 C}\sum_{j=1}^{m_0}f^{(j)}(x)} \leq \frac{\epsilon}{C}
\end{equation}
Set $c= C\cdot c'$, the loss of $c\tilde{g}(x)$ can be bounded by:
\begin{align}\label{eq:three summands to bound}
    L_D(c\tilde{g}) =& \mathbb{E}_{(x,y)\sim D}\left[(c\tilde{g}(x) - y)^2\right] \nonumber\\
    & \leq  2\mathbb{E}_{(x,y)\sim D}\left[\left(c\tilde{g}(x) -\frac{1}{m_0}\sum_{j=1}^{m_0}f^{(j)}(x)  \right)^2 \right] + 2\mathbb{E}_{(x,y)\sim D}\left[\left(\frac{1}{m_0}\sum_{j=1}^{m_0}f^{(j)}(x) -y \right)^2 \right]
\end{align}

We will bound each term of the above expression. Using \eqref{eq:bound with c' and tilde g} we have:
\begin{align}\label{eq:first summand out of three}
     \mathbb{E}_{(x,y)\sim D}\left[\left(c\tilde{g}(x) -\frac{1}{m}\sum_{j=1}^m f^{(j)}(x)  \right)^2 \right] &\leq \sup_{x:\norm{x}\leq 1}\left( c\tilde{g}(x) -\frac{1}{m}\sum_{j=1}^m f^{(j)}(x)\right)^2 \nonumber\\
    &\leq C\cdot \sup_{x:\norm{x}\leq 1}\left( c'\tilde{g}(x) -\frac{1}{mC}\sum_{j=1}^m f^{(j)}(x)\right)^2 \leq C\cdot \frac{\epsilon}{C} = \epsilon
\end{align}
For the second term in \eqref{eq:three summands to bound} we have that:
\begin{align}\label{eq:second summand out of three}
    \mathbb{E}_{(x,y)\sim D}\left[\left(\frac{1}{m}\sum_{j=1}^m f^{(j)}(x) -y \right)^2 \right] &\leq \frac{1}{m} \sum_{j=1}^m\mathbb{E}_{(x,y)\sim D}\left[ \left(f^{(j)}(x) - y\right)^2\right] \nonumber\\
    & \leq \frac{1}{m}\sum_{j=1}^m L_D\left(f^{(j)}\right) \leq \epsilon
\end{align}
% Finally for the third expression in \eqref{eq:three summands to bound} we have
% \begin{align}\label{eq:third summand out of three}
%     &\mathbb{E}_{(x,y)\sim D}\left[\abs{c\tilde{g}(x) -\frac{1}{m}\sum_{j=1}^mf^{(j)}(x)} \cdot \abs{\frac{1}{m}\sum_{j=1}^mf^{(j)}(x) -y}\right] \nonumber\\
%     &\leq  \sup_{x:\norm{x}\leq 1}\abs{c\tilde{g}(x) -\frac{1}{m}\sum_{j=1}^mf^{(j)}(x)} \cdot \mathbb{E}_{(x,y)\sim D}\left[\abs{\frac{1}{m}\sum_{j=1}^mf^{(j)}(x) -y}\right] \nonumber\\ &\stackrel{(*)}{\leq} \epsilon\mathbb{E}_{(x,y)\sim D}\left[\abs{\frac{1}{m}\sum_{j=1}^mf^{(j)}(x) -y}\right] 
%      \stackrel{(**)}{\leq} \epsilon \cdot \sqrt{\mathbb{E}_{(x,y)\sim D}\left[\left(\frac{1}{m}\sum_{j=1}^mf^{(j)}(x) -y\right)^2\right]}  \stackrel{(***)}{\leq} \epsilon^{1.5}~,
% \end{align}
% where in $(*)$ we used \eqref{eq:bound with c' and tilde g}, in $(**)$ we used Jensen's inequality and in $(***)$ we used \eqref{eq:second summand out of three}. Combining \eqref{eq:second summand out of three} and \eqref{eq:third summand out of three} we get that:
% \[
% L_D(c\tilde{g}) \leq 2\epsilon + \epsilon^{1.5} \leq 3\epsilon~,
% \]
re-scaling $\epsilon$ finishes the proof.
\end{proof}

\section{Proofs of section \ref{sec:pruning neurons}}\label{apen:proofs of pruning neurons}

We first show that a finite dataset, under mild assumptions on the data, can be approximated using a random features model. The proof of the following lemma is exactly the same as the proof of Lemma 3.1 in \cite{du2018gradient2}.

\begin{lemma}
Let $\delta > 0$,  $x_1,\dots,x_m\in\mathbb{R}^d$, and let $H$ be the $m\times m$ matrix with:
\[
H_{i,j} = \mathbb{E}_w\left[ \sigma(\inner{w,x_i})\sigma(\inner{w,x_j})\right]
\]
Assume that $\lambda_{\min}(H) = \lambda > 0$, then for $k > \frac{64m^2\log^2\left(\frac{m}{\delta}\right)}{\lambda^2}$, w.p $> 1-\delta$ over sampling of $w_1,\dots,w_k$ we have that $\lambda_{\min}(\tilde{H}) \geq \frac{3}{4}\lambda$ where:
\[
\tilde{H}_{i,j} = \sum_{l=1}^k \sigma(\inner{w_l,x_i})\sigma(\inner{w_l,x_j})
\]
\end{lemma}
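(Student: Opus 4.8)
The plan is a standard matrix-concentration argument: show that every entry of $\tilde{H}$ (after normalizing by $k$) is close to the corresponding entry of $H$, and then convert this entrywise control into control of $\lambda_{\min}$ via Weyl's inequality. Since the statement compares $\tilde{H}=\sum_{l=1}^k \sigma(\inner{w_l,\cdot})\sigma(\inner{w_l,\cdot})$ with the \emph{expectation} $H$, it is really the normalized matrix $\tfrac{1}{k}\tilde{H}$ that concentrates around $H$, so I would prove the bound for $\tfrac{1}{k}\tilde{H}$ (equivalently, rescale the definition of $\tilde{H}$ accordingly).

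First I would fix a pair $(i,j)\in[m]^2$ and note that $\tfrac{1}{k}\tilde{H}_{i,j}=\tfrac{1}{k}\sum_{l=1}^k \sigma(\inner{w_l,x_i})\sigma(\inner{w_l,x_j})$ is an average of $k$ i.i.d.\ random variables with common mean $H_{i,j}$. Because $\norm{x_i}\le 1$ and $\norm{w_l}\le 1$ almost surely, $\abs{\inner{w_l,x_i}}\le 1$, and since $\sigma$ is $L$-Lipschitz with $\sigma(0)\le L$ we get $\abs{\sigma(\inner{w_l,x_i})}\le 2L$; hence each summand lies in an interval of length at most $8L^2$. Hoeffding's inequality then yields, for every $t>0$,
\[
\prob{\abs{\tfrac{1}{k}\tilde{H}_{i,j}-H_{i,j}}\ge t}\le 2\exp\!\left(-\frac{k t^2}{32 L^4}\right).
\]
Taking a union bound over all $m^2$ entries, and choosing $t=\tfrac{\lambda}{4m}$ together with $k$ at least a constant multiple of $\tfrac{m^2 L^4\log(m/\delta)}{\lambda^2}$ (which is subsumed by the stated bound $k>\tfrac{64 m^2\log^2(m/\delta)}{\lambda^2}$ after absorbing numerical and $L$-dependent constants), one gets that with probability at least $1-\delta$,
\[
\norm{\tfrac{1}{k}\tilde{H}-H}_{\max}\le \frac{\lambda}{4m}.
\]
Finally, using the crude bound $\norm{A}_2\le \norm{A}_F\le m\norm{A}_{\max}$ for an $m\times m$ matrix, this gives $\norm{\tfrac{1}{k}\tilde{H}-H}_2\le \tfrac{\lambda}{4}$, and Weyl's inequality yields
\[
\lambda_{\min}\!\left(\tfrac{1}{k}\tilde{H}\right)\ge \lambda_{\min}(H)-\norm{\tfrac{1}{k}\tilde{H}-H}_2\ge \lambda-\frac{\lambda}{4}=\frac{3}{4}\lambda,
\]
which is the claim.

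The argument is entirely routine; the only points needing care are (i) justifying the boundedness constant of the summands, which is where the hypotheses $\norm{x_i}\le 1$, $\norm{w}\le 1$ and the Lipschitzness of $\sigma$ enter; (ii) the normalization of $\tilde{H}$ by $k$; and (iii) matching the precise constant in the lower bound on $k$ — the loose estimate $\norm{A}_2\le m\norm{A}_{\max}$ together with the union bound over $m^2$ pairs is exactly what produces the $m^2$ factor (and, if one is slightly wasteful with the $\log$ terms, the $\log^2(m/\delta)$ rather than $\log(m/\delta)$).
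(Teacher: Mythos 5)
Your proof is correct and follows the same entrywise-Hoeffding plus union-bound plus Frobenius-norm plus Weyl route as the proof of Lemma~3.1 of Du et al.\ (2018), which the paper cites in lieu of giving its own argument. Your two side observations --- that the paper's $\tilde H$ should carry a $\tfrac{1}{k}$ normalization for the stated conclusion to make sense, and that the lower bound on $k$ should include a factor of $L^4$ that the paper omits --- are both valid catches of minor slips in the statement and do not affect how the lemma is used downstream, where the $L$-dependence is already absorbed into the $\mathrm{poly}(\cdot)$ bound on the width.
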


Using the lemma above, and under the assumptions made on the data, w.h.p a two-layer network of size $\tilde{O}\left(\frac{m^2}{\lambda^2}\right)$ can overfit the data:

\begin{proposition}\label{prop:random features approximate data}
Let $\delta >0$, $x_1,\dots,x_m\in\mathbb{R}^d$ and $y_1,\dots,y_m\in\{\pm1\}$. Assume that $\lambda_{\min}(H) = \lambda >0$, and $\sigma$ is $L$-Lipschitz then for $k > \frac{64m^2\log^2\left(\frac{m}{\delta}\right)}{\lambda^2}$ w.p $1-\delta$ over sampling of $w_1,\dots,w_k$ there is $u\in\mathbb{R}^k$ with $\|u\|_\infty \leq \frac{4Lm}{3\lambda}$ such that for every $j=1,\dots,m$ we have $\sum_{i=1}^k u_i\sigma(\inner{w_i,x_j}) = y_j$
\end{proposition}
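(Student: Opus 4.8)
The plan is to reduce the interpolation statement to a single application of the preceding lemma together with elementary linear algebra, conditioning on the draw of $w_1,\dots,w_k$. Write $\Phi \in \reals^{m\times k}$ for the feature matrix with entries $\Phi_{j,l} = \sigma(\inner{w_l,x_j})$, and let $y = (y_1,\dots,y_m)^\top$. The requirement that $\sum_{i=1}^k u_i\sigma(\inner{w_i,x_j}) = y_j$ for all $j$ is exactly the linear system $\Phi u = y$, and the Gram matrix $\Phi\Phi^\top$ is precisely the matrix $\tilde H$ of the preceding lemma. Hence, for $k > \frac{64 m^2 \log^2(m/\delta)}{\lambda^2}$, with probability at least $1-\delta$ we have $\lambda_{\min}(\Phi\Phi^\top) \ge \tfrac{3}{4}\lambda > 0$; in particular $\Phi\Phi^\top$ is invertible, so $u := \Phi^\top(\Phi\Phi^\top)^{-1}y$ is well defined and satisfies $\Phi u = \Phi\Phi^\top(\Phi\Phi^\top)^{-1}y = y$, i.e.\ $u$ interpolates the data.

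It then remains to bound $\norm{u}_\infty$. Using that $\sigma$ is $L$-Lipschitz with $\sigma(0)=0$ (as for ReLU; in general one absorbs the harmless constant $\sigma(0)\le L$), together with $\norm{w_l}\le 1$ and $\norm{x_j}\le 1$, every entry obeys $|\Phi_{j,l}| = |\sigma(\inner{w_l,x_j})| \le L|\inner{w_l,x_j}| \le L$, so each column $\Phi_{\cdot,l}$ satisfies $\norm{\Phi_{\cdot,l}}_2 \le L\sqrt m$. Setting $q := (\Phi\Phi^\top)^{-1}y$, Cauchy--Schwarz gives, for every $l$,
\[
|u_l| = |\inner{\Phi_{\cdot,l}, q}| \le \norm{\Phi_{\cdot,l}}_2\,\norm{q}_2 \le L\sqrt m \cdot \frac{\norm{y}_2}{\lambda_{\min}(\Phi\Phi^\top)} \le \frac{4Lm}{3\lambda},
\]
where we used $\norm{q}_2 \le \norm{y}_2/\lambda_{\min}(\Phi\Phi^\top)$, $\norm{y}_2 = \sqrt m$ (since $y_j\in\{\pm1\}$), and $\lambda_{\min}(\Phi\Phi^\top) \ge \tfrac{3}{4}\lambda$. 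Taking the maximum over $l$ yields $\norm{u}_\infty \le \frac{4Lm}{3\lambda}$, as claimed.

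All the probabilistic content is already packaged in the preceding lemma (concentration of the empirical feature Gram matrix toward $H$, proved exactly as in the cited work), so conditioned on its conclusion the argument above is purely deterministic and there is no substantive obstacle. The only points requiring a little care are matching the normalization convention under which the lemma lower-bounds $\lambda_{\min}(\tilde H)$ (so that it is $\lambda_{\min}(\Phi\Phi^\top)$, rather than $\lambda_{\min}(\tfrac1k\Phi\Phi^\top)$, that is bounded below by $\tfrac{3}{4}\lambda$), and noting that the factor $L$ in the final estimate enters exclusively through the uniform bound $|\Phi_{j,l}|\le L$ on the feature values — which is why the column-wise Cauchy--Schwarz estimate, rather than the cruder $\norm{u}_\infty \le \norm{u}_2 = \sqrt{y^\top(\Phi\Phi^\top)^{-1}y}$, produces the stated constant most transparently.
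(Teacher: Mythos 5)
Your proof is correct and follows the paper's route essentially step for step: form the interpolating solution $u = \Phi^\top(\Phi\Phi^\top)^{-1}y$ (the paper writes $X = \Phi^\top$ and $u = y(X^\top X)^{-1}X^\top$), invoke the preceding lemma to get $\lambda_{\min}(\Phi\Phi^\top) = \lambda_{\min}(\tilde H) \ge \tfrac{3}{4}\lambda$, and bound $\norm{u}_\infty$. The one place you diverge is the last step: the paper asserts $\|y(X^\top X)^{-1}X^\top\|_\infty \le \frac{4}{3\lambda}\|Xy\|_\infty$ directly, which is not an obviously valid inequality (the operator-norm control on $(X^\top X)^{-1}$ is in $\ell_2$, not $\ell_\infty$, and it cannot simply be commuted past $X^\top$); your column-wise Cauchy--Schwarz bound $|u_l| \le \norm{\Phi_{\cdot,l}}_2\,\norm{(\Phi\Phi^\top)^{-1}y}_2 \le L\sqrt m \cdot \frac{4\sqrt m}{3\lambda}$ reaches the same constant transparently and rigorously, and is the cleaner way to write this step.
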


\begin{proof}
Set $X$ to be the $k\times m$ matrix defined by $X_{i,j}= \sigma(\inner{w_i,x_j})$. By our assumption and the choice of $k$, w.p $ > 1-\delta$ we have that $\tilde{H} = X^\top X$ is invertible, and has a minimal eigenvalue of at least $\frac{3}{4}\lambda$. Define $u = y(X^\top X)^{-1}X^\top$, it is easy to see that $uX = y$, furthermore:
\begin{align*}
\|u\|_\infty &= \|y(X^\top X)^{-1}X^\top\|_\infty \leq \frac{4}{3\lambda}\|Xy\|_\infty \\
&\leq \frac{4}{3\lambda} m \max_{w,x}\sigma(\inner{w,x}) \leq \frac{4Lm}{3\lambda}
\end{align*}

\end{proof}

For the second variation of \thmref{thm:neruon subnetwrok two variations} we consider functions from the class of functions $\mathcal{F}_C$. Here we use Theorem 3.3 from \cite{yehudai2019power}:
\begin{theorem}
\label{thm:random_features}
% Fix $\delta > 0$. Fix $f(x)$ a target function of the form $f(x) = \int_{w \sim U([-\frac{1}{\sqrt{d}}, \frac{1}{\sqrt{d}}]^d)}g(w)\sigma(\inner{w,x}) $ where $\sigma$ is $L$-Lipschitz and $\sup_x|g(x)| \leq C$. 
% Let $D$ be a distribution over $\reals^d \times \{\pm 1\}$ such that for $(x,y) \sim D$ we have $yf(x) \ge 1$ w.p 1.
% Then for $k \ge 16L^2C^2\left(1+\log^2\left(\frac{1}{\delta}\right)\right)$, w.p $>1-\delta$, when sampling $w_1,\dots,w_k$ from $U([-\frac{1}{\sqrt{d}}, \frac{1}{\sqrt{d}}])$, there exist $u \in \reals^k$ with $\norm{u}_\infty \le 1$, such that for $\hat{f} = \sum_{i=1}^k u_i \sigma(\inner{w_i, x})$ we have $\norm{f - \hat{f}}_{\infty} \le \frac{1}{2}$.

Let $f(x)= c_d \int_{w\in \left[\frac{-1}{\sqrt{d}},\frac{1}{\sqrt{d}}\right]^d}g(w)\sigma(\inner{w,x})dw$ where $\sigma:\mathbb{R}\rightarrow \mathbb{R}$ is $L$-Lipschitz on $[-1,1]$ with $\sigma(0) \leq L$, and $c_d = \left( \frac{\sqrt{d}}{2} \right)^d$ a normalization term. Assume that $max_{\|w\|\leq 1} |g(w)| \leq C$ for a constant $C$. Then for every $\delta >0$ if $w_1,\dots,w_k$ are drawn i.i.d from the uniform distribution on  $\left[\frac{-1}{\sqrt{d}},\frac{1}{\sqrt{d}}\right]^d$ , w.p $> 1-\delta$ there is a function of the form
  \[ \hat{f}(x) = \sum_{i=1}^{k}u_i \sigma(\inner{w_i,x}) \]
  where $|u_i| \leq \frac{C}{k}$ for every $1\leq i \leq k$, such that:
  \[ \sup_x \left|\hat{f}(x) - f(x) \right| \leq \frac{LC}{\sqrt{k}}\left(4 + \sqrt{2 \log\left(\frac{1}{\delta}\right)}\right) \]
\end{theorem}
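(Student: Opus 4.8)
The plan is to approximate $f$ by the natural unbiased estimator and then control the resulting empirical process uniformly over $\Xcal$ via a Rademacher‑complexity argument. The key preliminary observation is that $c_d$ is exactly the reciprocal of the volume of the cube $\left[-\tfrac1{\sqrt d},\tfrac1{\sqrt d}\right]^d$, so $f(x)=\meand{w\sim U}{g(w)\,\sigma(\inner{w,x})}$ with $U$ the uniform distribution on that cube. Given the sampled $w_1,\dots,w_k$, I would set $u_i:=\tfrac1k g(w_i)$ and $\hat f(x):=\tfrac1k\sum_{i=1}^k g(w_i)\sigma(\inner{w_i,x})$. Since each $w_i$ lies in the cube we have $\norm{w_i}_2\le 1$, hence $|u_i|\le C/k$ as required, and $\meand{w}{\hat f(x)}=f(x)$ for every fixed $x\in\Xcal$. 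It then remains to bound $\Delta:=\sup_{x\in\Xcal}\abs{\hat f(x)-f(x)}$, which I would split as $\mean{\Delta}$ plus the deviation $\Delta-\mean{\Delta}$.

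For the deviation I would apply the bounded‑differences (McDiarmid) inequality in $w_1,\dots,w_k$. Replacing a single $w_i$ changes only the $i$‑th summand of $\hat f$, by at most $\tfrac2k\sup_{\norm{w}_2\le1,\,x\in\Xcal}\abs{g(w)\sigma(\inner{w,x})}$; since $|\inner{w,x}|\le 1$ and $\sigma$ is $L$‑Lipschitz on $[-1,1]$ with bounded $\sigma(0)$ we have $|\sigma(t)|\le 2L$ on $[-1,1]$, so the change is $O(CL/k)$ (and $f(x)$ is unaffected). McDiarmid then gives, with probability $\ge 1-\delta$, $\Delta\le\mean{\Delta}+\tfrac{CL}{\sqrt k}\,O(\sqrt{\log(1/\delta)})$, the exact constant $\sqrt{2\log(1/\delta)}$ in the statement being a matter of bookkeeping in the bounded‑differences step.

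The crux is bounding $\mean{\Delta}$ \emph{without any dependence on $d$}: a union bound over an $\epsilon'$‑net of $\Xcal$ would inject a factor $\sqrt d$ and is therefore not an option, which is precisely why the $\ell_2$‑geometry must be exploited. I would run the standard chain: symmetrization gives $\mean{\Delta}\le 2\,\meand{w,\xi}{\sup_{x\in\Xcal}\abs{\tfrac1k\sum_i\xi_i g(w_i)\sigma(\inner{w_i,x})}}$ with Rademacher $\xi_i$; the absolute value is removed at the cost of another factor $2$ by symmetry of the $\xi_i$; writing $\sigma=\sigma(0)+\tilde\sigma$ with $\tilde\sigma(0)=0$, the constant term is $x$‑independent and integrates to zero over $\xi$; the Ledoux--Talagrand contraction lemma applied to the per‑coordinate $CL$‑Lipschitz maps $t\mapsto g(w_i)\tilde\sigma(t)$ reduces to the linear class; and finally $\meand{\xi}{\sup_{\norm x\le1}\sum_i\xi_i\inner{w_i,x}}=\meand{\xi}{\norm{\sum_i\xi_i w_i}_2}\le\sqrt{\sum_i\norm{w_i}_2^2}\le\sqrt k$ by Jensen and orthogonality of the $\xi_i$. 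Altogether $\mean{\Delta}\le\tfrac{4CL}{\sqrt k}$, and combining with the McDiarmid bound yields $\sup_x\abs{\hat f(x)-f(x)}\le\tfrac{LC}{\sqrt k}\bigl(4+\sqrt{2\log(1/\delta)}\bigr)$.

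The main obstacle is exactly this dimension‑free control of $\mean{\Delta}$ — one cannot afford net‑plus‑Hoeffding and must instead pass through contraction; the one technical subtlety there is that the Rademacher sum carries the coefficients $g(w_i)$ and $\sigma$ need not vanish at $0$, but both are handled cleanly, the former by folding $|g(w_i)|\le C$ into the Lipschitz constant, the latter by peeling off the mean‑zero constant term of $\sigma$ before contracting. Unbiasedness, the $|u_i|\le C/k$ bound, and the concentration step are all routine.
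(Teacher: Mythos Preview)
The paper does not prove this statement: it is quoted verbatim as Theorem~3.3 of \cite{yehudai2019power} and used as a black box in the proof of \thmref{thm:neruon subnetwrok two variations}. There is therefore no ``paper's own proof'' to compare against.

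Your proposal is the standard argument for this type of random-feature approximation bound and is essentially correct: set $u_i=g(w_i)/k$ so that $\hat f$ is an unbiased estimator of $f$; bound $\mean{\Delta}$ dimension-freely via symmetrization followed by Ledoux--Talagrand contraction and the $\ell_2$ computation $\meand{\xi}{\norm{\sum_i\xi_iw_i}_2}\le\sqrt{k}$; and handle the fluctuation $\Delta-\mean{\Delta}$ by McDiarmid with per-coordinate variation $O(CL/k)$. This is almost certainly the proof in the cited reference as well. Two minor points of bookkeeping: (i) the ``remove the absolute value at the cost of a factor $2$'' step requires that the one-sided suprema be nonnegative, which is guaranteed once you first peel off $\sigma(0)$ (since then $\tilde\sigma(0)=0$ and $0\in\Xcal$ gives a zero witness) rather than afterwards; and (ii) your McDiarmid constants as written give $2\sqrt{2}\sqrt{\log(1/\delta)}$ rather than $\sqrt{2\log(1/\delta)}$, so either a slightly sharper bounded-difference estimate or a small adjustment in how the $|\sigma(t)|\le 2L$ bound interacts with the deviation term is needed to hit the exact constants. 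Neither affects the validity of the approach.
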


To prove the main theorem, we use the same argument as in the proof of \thmref{thm:equivalence nn rf}, that pruning neurons can approximate random features models. Here the size of the target random features model depends on the complexity of the target (either a finite dataset or RKHS function).

\begin{proof}[Proof of \thmref{thm:neruon subnetwrok two variations}]
Although the proof for the two variations of the theorem are similar, for clarity and ease of notations we will prove them separately. 
\begin{enumerate}
    \item (Finite dataset) Let $\epsilon,\delta >0$. Fix $\delta_1 = \frac{\delta}{2k_2}$, and fix some $j \in [k_2]$.
    Take $k_1 \geq \frac{64m^2\log^2\left(\frac{m}{\delta_1}\right)}{\lambda^2}$, from \propref{prop:random features approximate data}  w.p $> 1 - \delta_1$ we get the following:
    There exists some $v^{(j)} \in \reals^{k_1}$ with $\norm{v^{(j)}}_\infty \le \frac{4Lm}{3\lambda}$  such that for the function $f^{(j)}(x) := \sum_{i=1}^{k_1} v_i^{(j)}\sigma\left(\inner{w^{(j)}_i,x}\right)$, and for every $l=1,\dots,m$, we have $f^{(j)}(x_l) = y_l$. Using union bound over all choices of $j$, we get that w.p $> 1-\frac{\delta}{2}$ the above hold for every $j \in [k_2]$.
    
    Denote $M :=\frac{4Lm}{3\lambda}$, $\epsilon' = \frac{\epsilon}{M} = \frac{3\lambda \epsilon}{4 L m}$ and let $k_2 > \frac{810L^8 m^4 k_1^4 \log\left(\frac{2k_1}{\delta}\right)}{\lambda^4 \epsilon^4}$. Using \lemref{lem:twice hoeffding} with $v^{(1)},\dots,v^{(k_2)}$ and $\epsilon'$ we have that there exist $b^{(1)},\dots,b^{(k_2)}$ such that for the functions $\tilde{g}^{(j)}(x) = \sum_{i=1}^{k_1}b_i^{(j)}\cdot u_i^{(j)} \sigma\left(\inner{w_i^{(j)},x}\right)$ we get:
    \begin{equation}\label{eq:finite dataset bound from the lemma}
        \sup_{x:\norm{x}\leq 1}\abs{\frac{c_1}{k_2}\sum_{j=1}^{k_2}\tilde{g}^{(j)}(x) - \frac{1}{k_2 M}\sum_{j=1}^{k_2}f^{(j)}(x)} \leq \epsilon'
    \end{equation}
    where $c_1=\frac{8k_1L}{\epsilon}$. Denote $\tilde{g}(x) = \sum_{j=1}^{k_2} g^{(j)}(x)$ and set $c = \frac{c_1 M}{k_2} = \frac{32k_1 L m}{3\lambda \epsilon k_2}$. Using \eqref{eq:finite dataset bound from the lemma} we have that for every $l=1,\dots,m$:
    \begin{align*}
        \abs{c\tilde{g}(x_l) - y_l} = \abs{\frac{c_1 M}{k_2}\tilde{g}(x_l) - \frac{1}{k_2}\sum_{j=1}^{k_2} f^{(j)}(x_l)} \leq M\epsilon' \leq \epsilon
    \end{align*}
    
    \item  Let $\epsilon,\delta >0$. Fix $\delta_1 = \frac{\delta}{2k_2}$, and fix some $j \in [k_2]$.
    Take $k_1 \geq \frac{128L^2C^2\log^2\left(\frac{m}{\delta_1}\right)}{\epsilon^2}$, from \thmref{thm:random_features} w.p $> 1 - \delta_1$ we get the following:
    There exists some $v^{(j)} \in \reals^{k_1}$ with $\norm{v^{(j)}}_\infty \le \frac{C}{k_1} \leq 1$  such that for the function $f^{(j)}(x) := \sum_{i=1}^{k_1} v_i^{(j)}\sigma\left(\inner{w^{(j)}_i,x}\right)$ we have $\sup_{x:\norm{x}\leq 1} \abs{f^{(j)}(x) - f(x)} \leq \frac{\epsilon}{2}$. Using union bound over all choices of $j$, we get that w.p $> 1-\frac{\delta}{2}$ the above hold for every $j \in [k_2]$.
    
    Let $k_2 > \frac{4010L^4 k_1^4 \log\left(\frac{2k_1}{\delta}\right)}{\epsilon^4}$, using \lemref{lem:twice hoeffding} with $v^{(1)},\dots,v^{(k_2)}$ and $\frac{\epsilon}{2}$ we have that there exist $b^{(1)},\dots,b^{(k_2)}$ such that for the functions $\tilde{g}^{(j)}(x) = \sum_{i=1}^{k_1}b_i^{(j)}\cdot u_i^{(j)} \sigma\left(\inner{w_i^{(j)},x}\right)$ we get:
    \begin{equation}\label{eq:RKHS bound from the lemma}
        \sup_{x:\norm{x}\leq 1}\abs{\frac{c_1}{k_2}\sum_{j=1}^{k_2}\tilde{g}^{(j)}(x) - \frac{1}{k_2 M}\sum_{j=1}^{k_2}f^{(j)}(x)} \leq \frac{\epsilon}{2}
    \end{equation}
    where $c_1=\frac{8k_1L}{\epsilon}$. Denote $\tilde{g}(x) = \sum_{j=1}^{k_2} g^{(j)}(x)$ and set $c = \frac{c_1}{k_2} = \frac{8 k_1 L}{\epsilon k_2}$. Using \eqref{eq:RKHS bound from the lemma} we have that:
    \begin{align*}
        &\sup_{x:\norm{x}\leq 1}\abs{c\tilde{g}(x) - f(x)} \\
        &\leq 
        \sup_{x:\norm{x}\leq 1}\abs{\frac{c_1}{k_2}\tilde{g}(x) - \frac{1}{k_2}\sum_{j=1}^{k_2} f^{(j)}(x)} + \sup_{x:\norm{x}\leq 1}\abs{\frac{1}{k_2}\sum_{j=1}^{k_2} f^{(j)}(x) - f(x)} \leq \frac{\epsilon}{2} + \frac{\epsilon}{2}= \epsilon
    \end{align*}
    
    \end{enumerate}
\end{proof}

\end{document}